\def\eqref#1{equation~\ref{#1}}
\def\1{\bm{1}}
\def\vb{{\bm{b}}}
\def\vx{{\bm{x}}}
\def\vy{{\bm{y}}}
\def\vz{{\bm{z}}}
\def\mB{{\bm{B}}}
\def\mI{{\bm{I}}}
\def\mW{{\bm{W}}}
\def\mY{{\bm{Y}}}
\def\mZ{{\bm{Z}}}
\DeclareMathAlphabet{\mathsfit}{\encodingdefault}{\sfdefault}{m}{sl}
\SetMathAlphabet{\mathsfit}{bold}{\encodingdefault}{\sfdefault}{bx}{n}
\def\gA{{\mathcal{A}}}
\def\gD{{\mathcal{D}}}
\def\gH{{\mathcal{H}}}
\def\gL{{\mathcal{L}}}
\def\gN{{\mathcal{N}}}
\def\gO{{\mathcal{O}}}
\def\gP{{\mathcal{P}}}
\def\gQ{{\mathcal{Q}}}
\def\gS{{\mathcal{S}}}
\def\gU{{\mathcal{U}}}
\newcommand{\R}{\mathbb{R}}
\DeclareMathOperator*{\argmin}{arg\,min}
\theoremstyle{plain}
\newtheorem{theorem}{Theorem}[section]
\theoremstyle{definition}
\theoremstyle{remark}
\def\hrz{t:t+H}
\def\lb{t-L:t}
\def\vtau{\bm{\tau}}
\def\Ttrain{T'}
\newcommand{\splitatcommas}[1]{%
  \begingroup
  \begingroup\lccode`~=`, \lowercase{\endgroup
    \edef~{\mathchar\the\mathcode`, \penalty0 \noexpand\hspace{0pt plus 1em}}%
  }\mathcode`,="8000 #1%
  \endgroup
}
\NewDocumentEnvironment{places}{mm}
 {
  \setlength{\tabcolsep}{0pt} 
  \dim_set:Nn \l_places_width_dim
   {
    (#1-\ht\strutbox-\dp\strutbox-2pt)/(#2)
   }
  \begin{tabular}{r @{\hspace{2pt}} *{#2}{c}}
 }
 {
  \end{tabular}
 }
\NewDocumentCommand{\place}{mm}
 {
  \seq_set_from_clist:Nn \l_places_images_in_seq { #2 }
  \seq_set_map:NNn \l_places_images_out_seq \l_places_images_in_seq { \places_set_image:n {##1} }
  \seq_put_left:Nn \l_places_images_out_seq
   {
    \begin{tabular}{c}\rotatebox[origin=c]{90}{\strut#1}\end{tabular}
   }
  \seq_use:Nn \l_places_images_out_seq { & } \\ \addlinespace
 }
\def\shortname{DeepTime}
\def\longname{Learning Deep Time-index Models for Time Series Forecasting}
\icmltitlerunning{{\longname}}
\begin{document}

\twocolumn[
\icmltitle{{\longname}}



\icmlsetsymbol{equal}{*}

\begin{icmlauthorlist}
\icmlauthor{Gerald Woo}{SFRA,SMU}
\icmlauthor{Chenghao Liu}{SFRA}
\icmlauthor{Doyen Sahoo}{SFRA}
\icmlauthor{Akshat Kumar}{SMU}
\icmlauthor{Steven Hoi}{SFRA}
\end{icmlauthorlist}

\icmlaffiliation{SFRA}{Salesforce Research Asia}
\icmlaffiliation{SMU}{School of Computing and Information Systems, Singapore Management University}

\icmlcorrespondingauthor{Gerald Woo}{gwoo@salesforce.com}
\icmlcorrespondingauthor{Chenghao Liu}{chenghao.liu@salesforce.com}

\icmlkeywords{Machine Learning, ICML, Time Series, Forecasting, Deep Learning, Meta-optimization, Time-index Model, Time-index, Game-changing}

\vskip 0.3in
]



\printAffiliationsAndNotice{}  

\begin{abstract}
Deep learning has been actively applied to time series forecasting, leading to a deluge of new methods, belonging to the class of \textit{historical-value} models. Yet, despite the attractive properties of \textit{time-index} models, such as being able to model the continuous nature of underlying time series dynamics, little attention has been given to them. Indeed, while naive deep time-index models are far more expressive than the manually predefined function representations of classical time-index models, they are inadequate for forecasting, being unable to generalize to unseen time steps due to the lack of inductive bias. In this paper, we propose {\shortname}, a meta-optimization framework to learn deep time-index models which overcome these limitations, yielding an efficient and accurate forecasting model. Extensive experiments on real world datasets in the long sequence time-series forecasting setting demonstrate that our approach achieves competitive results with state-of-the-art methods, and is highly efficient. Code is available at \url{https://github.com/salesforce/DeepTime}.
\end{abstract}
\section{Introduction}
\label{sec:intro}
Time series forecasting has important applications across business and scientific domains, such as demand forecasting \citep{carbonneau2008application}, capacity planning and management \citep{kim2003financial}, and anomaly detection \citep{laptev2017time}. There are two typical approaches to time series forecasting -- historical-value, and time-index models. Historical-value models predict future time step(s) as a function of past observations, \(\hat{\vy}_{t+1} = f(\vy_t, \vy_{t-1}, \ldots)\) \cite{benidis2020neural}, while time-index models, a less studied approach, are defined to be models whose predictions are \textit{purely} functions of the corresponding time-index features at future time-step(s), \(\hat{\vy}_{t+1} = f(\vtau_{t+1})\) (see \cref{fig:paradigm} for a visual comparison). 
Historical-value models have been widely used due to their simplicity. However, they can only model temporal relationships at the data sampling frequency. This is an issue since time series observations available to us tend to have a much lower resolution (sampled at a lower frequency) than the underlying dynamics \citep{gong2015discovering, gong2017causal}, leading historical-value models to be prone to capturing spurious correlations. On the other hand, time-index models intrinsically avoid this problem, directly modeling the mapping from time-index features to predictions in continuous space and learning signal representations which change smoothly and correlate with each other in continuous space.\looseness=-1
\begin{figure}[t]
    \centering
    \begin{subfigure}[b]{\columnwidth}
        \centering
        \includegraphics[width=\textwidth]{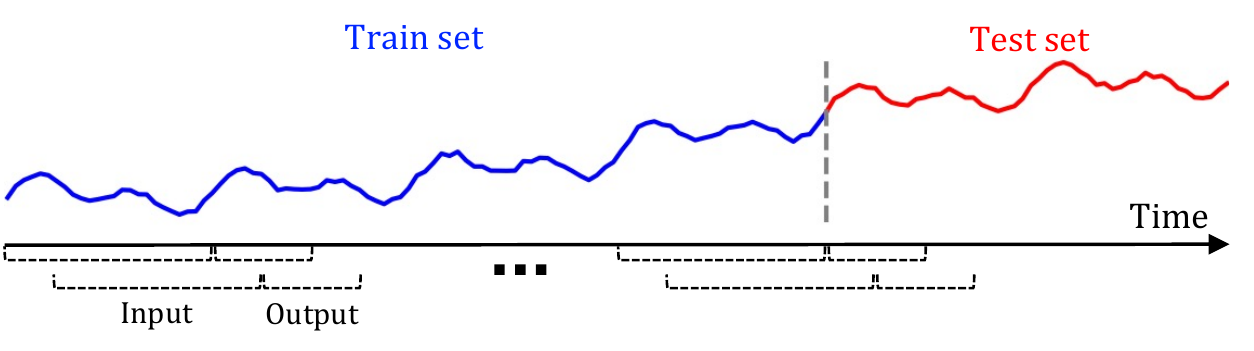}
        \vskip -0.05in
        \caption{Historical-value Models}
    \label{subfig:historical-value-models}
    \end{subfigure}
    \begin{subfigure}[b]{\columnwidth}
        \centering
        \includegraphics[width=\textwidth]{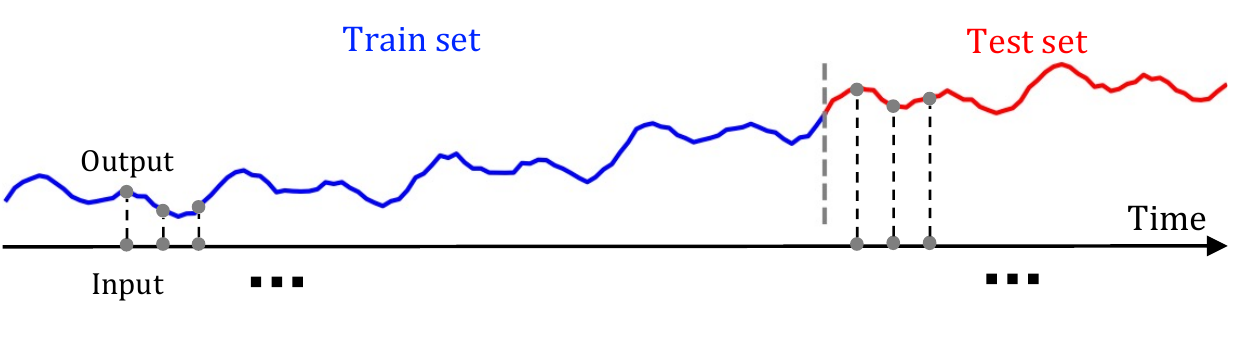}
        \vskip -0.05in
        \caption{Time-index Models}
        \label{subfig:time-index-models}
    \end{subfigure}
    \vskip -0.1in
    \caption{
        Visual comparison between the two paradigms of historical-value, and time-index models.
        After the models are trained on the training set, forecasts are made over the test set, which may be arbitrarily long.
        Historical-value models make predictions conditioning on an input lookback window, whereas time-index models do not make use of new incoming information during the test phase.
    }
    \label{fig:paradigm}
    \vskip -0.15in
\end{figure}

Classical time-index models \cite{taylor2018forecasting, hyndman2018forecasting, ord2017principles} rely on predefined function forms to generate predictions. They optionally follow the structural time-series  formulation \cite{harvey1993structural}, \(\vy_t = g(\vtau_t) + s(\vtau_t) + h(\vtau_t)\), where \(g, s, h\) represent trend, periodic, and holiday components respectively. For example, \(g\) could be predefined as a linear, polynomial, or even a piece-wise linear function. While these functions are simple and easy to learn, they have limited capacity, are unable to fit more complex time series, and such predefined function forms may be too stringent of an assumption which may not hold in practice. While it is possible to perform model selection across various function forms, this requires either strong domain expertise, or computationally heavy cross-validation across a large set of parameters. \looseness=-1

\begin{figure}[t]
    \centering
    \includegraphics[width=\columnwidth]{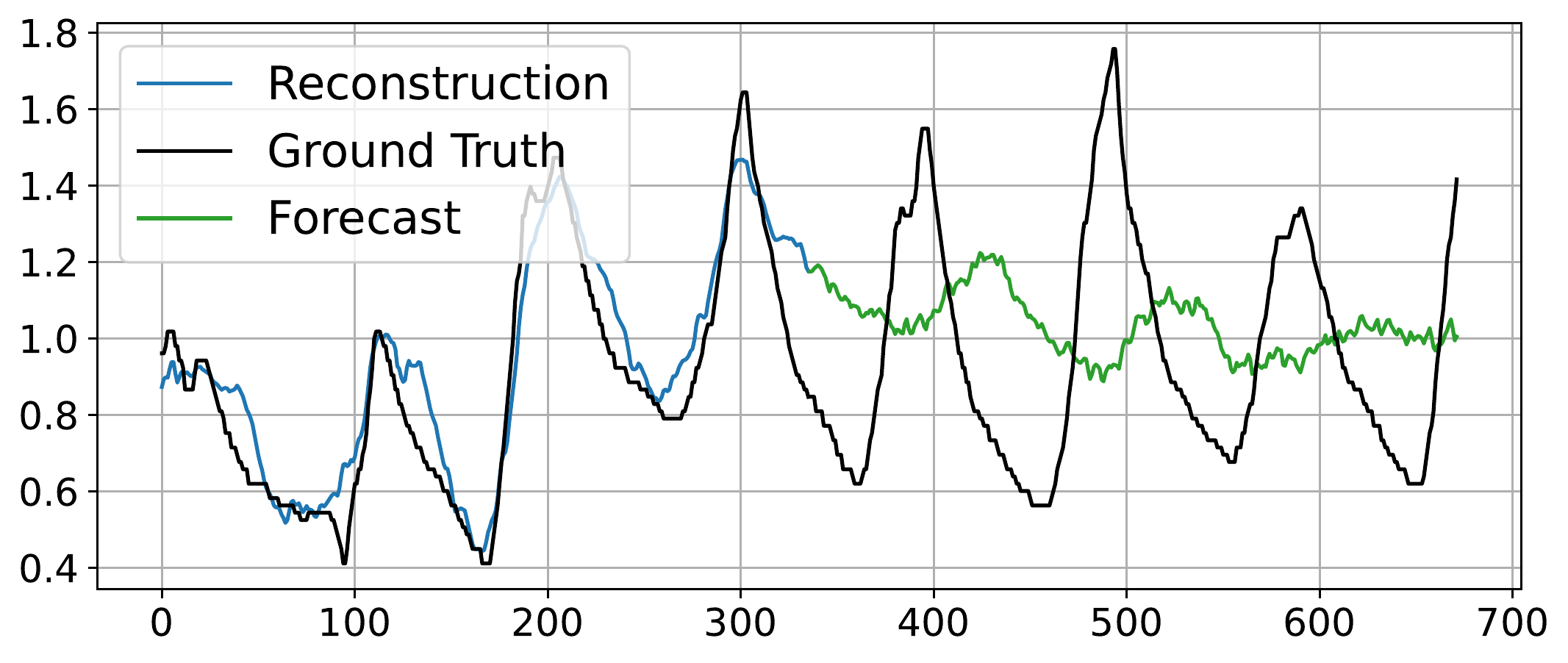}
    \vspace{-0.2in}
    \caption{
    Ground truth time series and predictions from a vanilla deep time-index model. 
    The reconstruction of historical training data as well as forecast over the horizon is visualized.
    Observe that it overfits to historical data and is unable to extrapolate.
    This model corresponds to +Local in \cref{tab:ablation-variants} of our ablations.}
    \label{fig:extrapolate}
    \vspace{-0.2in}
\end{figure}
Deep learning gives a seemingly natural solution to this problem faced by classical time-index models -- parameterize \(f(\vtau_t)\) as a neural network, and learn the function form in a \textit{purely} data-driven manner. While neural networks are extremely expressive with a strong capability to approximate complex functions, we argue that when trained via standard supervised learning, they face the debilitating problem of an inability to extrapolate across the forecast horizon, i.e. unable to generalize to future time step(s). This is visualized in \cref{fig:extrapolate}, where a deep time-index model is able to fit the training data well in the interval \([0, T']\), but it is unable to produce meaningful forecasts for the interval \([T', T]\). This arises due to the lack of extrapolation capability of neural networks when used in time-index models. While classical time-index models have limited expressitivity, the strong inductive biases (e.g. linear trend, periodicity) instructs their predictions over unseen time steps. On the other hand, vanilla deep time-index models do not have such inductive biases, and thus, while they do learn an appropriate function form over the training data, achieving extremely good reconstruction loss, they have little to no generalization capabilities over unseen time steps. \looseness=-1

We now raise the question -- how do we achieve the best of both worlds -- to retain the flexibility and expressiveness of the neural network, as well as to learn inductive bias to guarantee extrapolation capability over unseen time steps from time series data? 
We present a solution, {\shortname}, a meta-optimization framework to learn deep time-index models for time series forecasting. 
Our framework splits the learning process of deep time-index models into two stages, the inner, and outer learning process.
The inner learning process acts as the standard supervised learning process, fitting parameters to recent time steps.
The outer learning process enables the deep time-index model to learn a strong inductive bias for extrapolation from data. 
We summarize our contributions as follows:
\vspace{-0.1in}
\begin{itemize}
    \item We introduce the use of deep time-index models for time series forecasting, proposing a meta-optimization framework to address their shortcomings, making them viable for forecasting.
    \item We introduce a specific function form for deep time-index models by leveraging implicit neural representations \citep{sitzmann2020implicit}, and a novel concatenated Fourier features module to efficiently learn high frequency patterns in time series.
    We also specify an efficient instantiation of the meta-optimization procedure via a closed-form ridge regressor \citep{bertinetto2018metalearning}.
    \item We conduct extensive experiments on the long sequence time series forecasting (LSTF) setting, demonstrating {\shortname} to be extremely competitive with state-of-the-art baselines. At the same time, {\shortname} is highly efficient in terms of runtime and memory.
\end{itemize}
\section{Related Work}
\paragraph{Neural Forecasting} Neural forecasting \cite{benidis2020neural} methods have seen great success in recent times. One related line of research are Transformer-based methods for LSTF \citep{li2019enhancing, zhou2021informer, xu2021autoformer, woo2022etsformer, zhou2022fedformer} which aim to not only achieve high accuracy, but to overcome the vanilla attention's quadratic complexity.
Fully connected methods \citep{oreshkin2020nbeats, challu2022nhits} have also shown success, with \cite{challu2022nhits} introducing hierarchical interpolation and multi-rate data sampling for the LSTF task.
Bi-level optimization in the form of meta-learning and the use of differentiable closed-form solvers has been explored in time series forecasting \citep{grazzi2021meta}, for the purpose of adapting to new time series datasets, where tasks are defined to be the entire time series.

\paragraph{Time-index Models}
Time-index models take as input time-index features such as datetime features to predict the value of the time series at that time step. They have been well explored as a special case of regression analysis \citep{hyndman2018forecasting, ord2017principles}, and many different predictors have been proposed for the classical setting,including linear, polynomial, and piecewise linear trends, and dummy variables indicating holidays. 
Of note, Fourier terms have been used to model periodicity, or seasonal patterns, and is also known as harmonic regression \citep{young1999dynamic}.
Prophet \citep{taylor2018forecasting} is a popular classical approach which uses a structural time series formulation, specialized for business forecasting.
Another classical approach of note are Gaussian Processes \citep{rasmussen2003gaussian, corani2021time} which are non-parametric models, often requiring complex kernel engineering.
\cite{godfrey2017neural} introduced an initial attempt at using time-index based neural networks to fit a time series for forecasting. Yet, their work is more reminiscent of classical methods, manually specifying periodic and non-periodic activation functions, analogous to the representation functions.

\paragraph{Implicit Neural Representations} INRs have recently gained popularity in the area of neural rendering \citep{tewari2021advances}. They parameterize a signal as a continuous function, mapping a coordinate to the value at that coordinate. A key finding was that positional encodings \citep{mildenhall2020nerf, tancik2020fourier} are critical for ReLU MLPs to learn high frequency details, while another line of work introduced periodic activations \citep{sitzmann2020implicit}. 
Meta-learning on via INRs have been explored for various data modalities, typically over images or for neural rendering tasks \citep{sitzmann2020metasdf, tancik2021learned, dupont2021generative}, using both hypernetworks and optimization-based approaches. \cite{yuce2021structured} show that meta-learning on INRs is analogous to dictionary learning.
In time series, \cite{jeong2022time} explored using INRs for anomaly detection, opting to make use of periodic activations and temporal positional encodings. 

\begin{figure*}[ht]
\centering
\begin{subfigure}[t]{.175\textwidth}
    \centering
    \includegraphics[width=\textwidth]{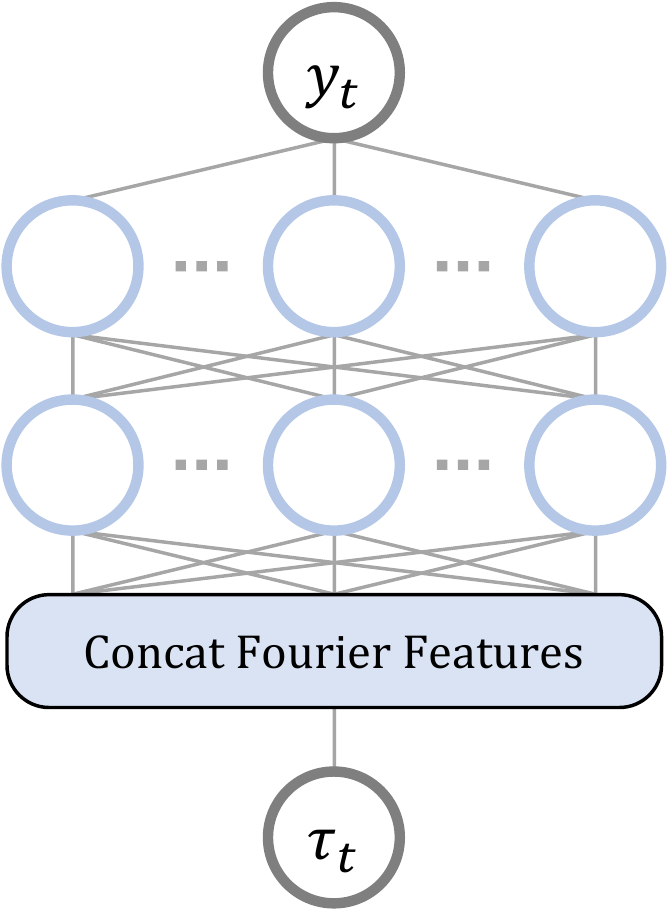}
    \caption{Deep Time-index Model Architecture}
    \label{subfig:model}
\end{subfigure}
\hspace{0.1in}
\begin{subfigure}[t]{.8\textwidth}
    \centering
    \includegraphics[width=\textwidth]{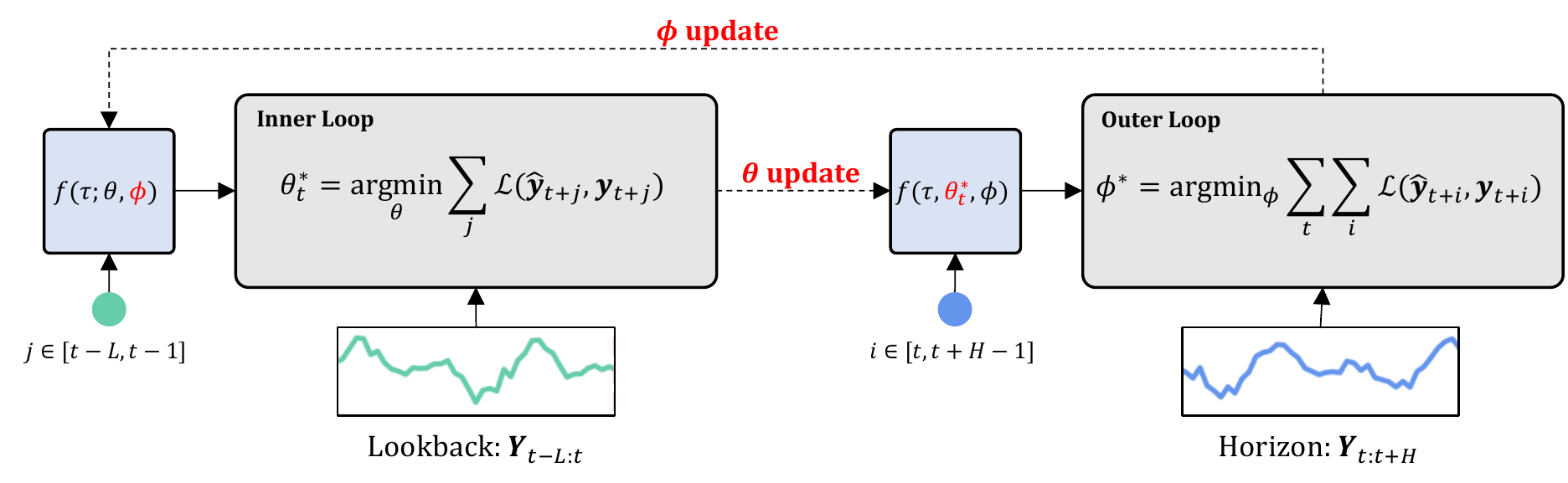}
    \caption{Meta-optimization Framework}
    \label{subfig:meta-optimization-framework}
\end{subfigure}
\vspace{-0.1in}
\caption{
Left: Our proposed deep time-index model has a simple overall architecture, comprising a concatenated Fourier features, and several layers of MLPs.
Right: The meta-optimization framework aims to learn meta parameters of deep time-index models by following a bi-level optimization problem formulation (\cref{eq:outer-loop,eq:inner-loop}). The inner loop is optimized over the lookback window, yielding the optimal base parameters for a locally stationary distribution, \(\theta_t^*\). Each lookback window has it's own optimal base parameters, which are then used to predict the immediate following forecast horizon. This composes the outer loop, which optimizes the meta parameters over the entire time series, yielding an inductive bias which enables extrapolation over forecast horizons, given the optimal base parameters.
}
\vspace{-0.1in}
\label{fig:method}
\end{figure*}
\section{{\shortname}}
In this section, we first formally describe the time series forecasting problem, and how to use time-index models for this problem setting.
Next, we introduce the model architecture specifics for deep time-index models.
Finally, we introduce the generic form of our meta-optimization framework, then specifically, how to use of a differentiable closed-form ridge regression module to perform the meta-optimization efficiently.
Pseudocode of {\shortname} is available in \cref{app:pseudocode}.

\paragraph{Problem Formulation}
In time series forecasting, we consider a time series dataset \((\vy_1, \vy_2, \ldots, \vy_T)\), where \(\vy_t \in \R^m\) is the \(m\)-dimension observation at time \(t\).
Consider a train-test split such that the range \((1, \ldots, \Ttrain)\) is considered to be the training set and the range \((\Ttrain+1, \ldots, T)\) is the test set, where \(T - \Ttrain \geq H\). The goal is to construct point forecasts over a horizon of  length \(H\), over the test set, \(\hat{\mY}_{\hrz} = [\hat{\vy}_t; \ldots; \hat{\vy}_{t+H-1}], \forall t = \Ttrain + 1, \Ttrain + 2, \ldots, \Ttrain - H+1\). 
A time-index model, \(f: \R \to \R^m, f: \vtau_t \mapsto \hat{\vy}_t\), achieves this by minimizing a reconstruction loss \(\gL: \R^m \times \R^m \to \R\) over the training set, where \(\vtau_t\) is a time-index feature for which values are known for all time steps. Then, we can query it over the test set to obtain forecasts, \(\hat{\mY}_{\hrz} = f(\vtau_{\hrz})\).

\subsection{Deep Time-index Model Architecture}
INRs \citep{sitzmann2020implicit} are a class of coordinate-based models, mapping coordinates to values, which have been extensively studied. 
Time-index models are a case of 1d coordinate-based models, thus, we leverage this existing class of models, which are essentially a stack of multi-layered perceptrons as our proposed deep time-index model architecture.
Visualized in \cref{subfig:model}, a \(K\)-layered, ReLU \citep{nair2010rectified} INR is a function \(f_{\theta}: \R^c \to \R^m\) where:\looseness=-1
\begin{align}
    \vz^{(0)} & = \vtau \nonumber \\
    \vz^{(k+1)} & = \max(0, \mW^{(k)} \vz^{(k)} + \vb^{(k)}), \quad k = 0, \ldots, K-1 \nonumber \\
    f_{\theta}(\vtau) & = \mW^{(K)} \vz^{(K)} + \vb^{(K)} \label{eq:inr}
\end{align}
where \(\vtau \in \R^c\) are time-index features. 
MLPs have shown to experience difficulty in learning high frequency functions, this problem known as ``spectrial bias'' \citep{rahaman2019spectral, basri2020frequency}. Coordinate-based methods suffer from this issue in particular when trying to represent high frequency content present in the signal.
\citet{tancik2020fourier} introduced a random Fourier features layer which allows INRs to fit to high frequency functions, by modifying \(\vz^{(0)} = \gamma(\vtau) = [\sin(2 \pi \mB \vtau), \cos(2 \pi \mB \vtau)]^T\), where each entry in \(\mB \in \R^{d/2 \times c}\) is sampled from \(\gN(0, \sigma^2)\) with \(d\) is the hidden dimension size of the INR and \(\sigma^2\) is the scale hyperparameter. \([\cdot, \cdot]\) is a row-wise stacking operation.
\vspace{-0.1in}

\paragraph{Concatenated Fourier Features}
While the random Fourier features layer endows INRs with the ability to learn high frequency patterns, one major drawback is the need to perform a hyperparameter sweep for each task and dataset to avoid over or underfitting. We overcome this limitation with a simple scheme of concatenating multiple Fourier basis functions with diverse scale parameters, i.e. \(\splitatcommas{\gamma(\vtau) = [\sin(2 \pi \mB_1 \vtau), \cos(2 \pi \mB_1 \vtau), \ldots, \sin(2 \pi \mB_S \vtau), \cos(2 \pi \mB_S \vtau)]^T}\), where elements in \(\mB_s \in \R^{d/2 \times c}\) are sampled from \(\gN(0, \sigma_s^2)\), and \(\mW^{(0)} \in \R^{d \times Sd}\). We perform an analysis in \cref{subsec:ablation} and show that the performance of our proposed concatenated Fourier features (CFF) does not significantly deviate from the setting with the optimal scale parameter obtained from a hyperparameter sweep.

\subsection{Meta-optimization Framework}
\label{subsec:meta-optimization}
Explained in \cref{sec:intro}, vanilla deep time-index models are unable to perform forecasting due to their failure in extrapolating beyond observed time-indices. 
Formally, the original hypothesis class of time-index model is denoted \(\gH_{\mathrm{INR}} = \left \{ f(\tau; \theta) \mid \theta \in \Theta \right \}\), where \(\Theta\) is the parameter space. The original hypothesis class is too expressive, providing no guarantees that training on the lookback window leads to good extrapolation. 
To solve this, our meta-optimization framework learns an inductive bias for the deep time-index model from data. 
Firstly, rather than using the entire training set in a naive supervised learning setting, whereby older training points provide no additional benefit in learning a time-index model for extrapolation, we split the time series into lookback window and forecast horizon pairs. 
Let the \(L\) time steps preceding the forecast horizon at time step \(t\), be the lookback window, \(\mY_{\lb} = [\vy_{t-L}; \ldots; \vy_{t-1}]^T \in \R^{L \times m}\). 
Next, consider the case where we split the model parameters into two, possibly overlapping subsets, \(\phi \in \Phi\) and \(\theta \in \Theta\), known as the meta and base parameters, respectively, where \(\Phi\) is the parameter space of the meta parameters.
The meta parameters are responsible for learning the inductive bias from multiple lookback window and forecast horizon pairs from the training data, while the base parameters aim to learn and adapt quickly to the lookback window at test time.
Thus, we aim to encode an inductive bias in \(\phi\), \textbf{learned} to enable extrapolation across the forecast horizon when the base parameters adapt corresponding lookback window, resulting in \(\gH_{\mathrm{{Meta}}} = \left \{ f(\tau; \theta, \phi^*) \mid \theta \in \Theta \right \}\).
This is achieved by formulating the bi-level optimization problem:
\looseness=-2
\begin{align}
    \phi^* & = \argmin_{\phi} \sum_{t=L+1}^{T-H+1} \; \sum_{i=0}^{H-1} \gL(f(\vtau_{t+i}; \theta^*_t, \phi), \vy_{t+i}) \label{eq:outer-loop} \\
    s.t. \quad \theta^*_t & = \argmin_{\theta} \sum_{j=-L}^{-1} \gL(f(\vtau_{t+j}; \theta, \phi), \vy_{t+j}) \label{eq:inner-loop}
\end{align}
Illustrated in \cref{subfig:meta-optimization-framework}, \cref{eq:outer-loop} represents the outer loop, and \cref{eq:inner-loop}, the inner loop. The first summation in the outer loop over index \(t\) represents iterating over the time steps of the dataset, and the second summation over index \(i\) represents each time step of the forecast horizon. The summation in the inner loop over index \(j\) represents each time step of the lookback window.

\paragraph{Fast and Efficient Meta-optimization}
Following the above generic formulation of the meta-optimization framework to learn deep time-index models, we now describe a specific instantiation of the framework which enables both training and forecasting at test time to be fast and efficient. 
Similar bi-level optimization problems have been explored \cite{ravi2016optimization, finn2017model} and one naive approach is to directly backpropagate through inner gradient steps. However, such methods are highly inefficient, have many additional hyperparameters, and are instable during training \cite{antoniou2018how}. 
Instead, to achieve speed and efficiency, we specify that the base parameters consist of only the last layer of the INR, while the rest of the INR are meta parameters. Thus, the inner loop optimization only applies to this last layer. This transforms the inner loop optimization problem into a simple ridge regression problem for the case of mean squared error loss, having a simple analytic solution to replace the otherwise complicated non-linear optimization problem \cite{bertinetto2018metalearning}.

Formally, for a \(K\)-layered model, \(\phi = \{\mW^{(0)}, \vb^{(0)}, \ldots, \mW^{(K-1)}, \vb^{(K-1)}, \lambda\}\) are the meta parameters and \(\theta = \{\mW^{(K)}\}\) are the base parameters, following notation from \cref{eq:inr}. 
Then let \(g_{\phi}: \R \to \R^d\) be the meta learner where \(g_{\phi}(\vtau) = \vz^{(K)}\). 
For a lookback-horizon pair, \((\mY_{\lb}, \mY_{\hrz})\), the features of the lookback window obtained from the meta learner is denoted \(\mZ_{\lb} = [g_{\phi}(\vtau_{t-L}); \ldots; g_{\phi}(\vtau_{t-1})]^T \in \R^{L \times d}\), where \([\cdot ; \cdot]\) is a column-wise concatenation operation. The inner loop thus solves the optimization problem:
\begin{align}
    \mW^{(K)*}_t & = \argmin_{\mW} ||\mZ_{\lb}\mW - \mY_{\lb}||^2 + \lambda ||\mW||^2 \nonumber \\
    & = (\mZ_{\lb}^T\mZ_{\lb} + \lambda \mI)^{-1} \mZ_{\lb}^T \mY_{\lb} \label{eq:diff-closed-form}
\end{align}
Now, let \(\mZ_{\hrz} = [g_{\phi}(\vtau_{t}); \ldots; g_{\phi}(\vtau_{t+H-1})]^T \in \R^{H \times d}\) be the features of the forecast horizon. Then, our predictions are \(\hat{\mY}_{\hrz} = \mZ_{\hrz}\mW_t^{(K)*}\). This closed-form solution is differentiable, which enables gradient updates on the parameters of the meta learner, \(\phi\).
A bias term can be included for the closed-form ridge regressor by appending a scalar 1 to the feature vector \(g_{\phi}(\vtau)\).
The end result of training {\shortname} on a dataset is the restricted hypothesis class \(\gH_\mathrm{{\shortname}} = \left \{ g_{\phi^*}(\vtau)^T \mW^{(K)} \mid \mW^{(K)} \in \R^{d \times m} \right \} \).
Finally, we propose to use relative time-index features, \(\vtau_{t+i} = \frac{i+L}{L+H-1}\) for \(i = -L, -L + 1, \ldots, H-1\), i.e. a \([0,1]\)-normalized time-index.

\section{Theoretical Analysis}
In the following, we derive a generalization bound of {\shortname} under the PAC-Bayes framework \cite{amit2018meta}.
Give a long time series, suppose we can split it into \(n\) instances (each has a length \(L\) lookback window and length \(H\) forecast horizon) for training. Then the \(k\)-th instance is denoted \(\gS_k = \{z_{k-L}, \ldots, z_k, \ldots, z_{k+H-1}\}\), where \(z_t = (\vtau_t, \vy_t)\). The PAC-Bayes framework for our proposed meta-optimization framework considers a Bayesian setting of {\shortname}, having prior and posterior distributions of the meta and base parameters. The generalization bound is presented below, with the detailed proof in \cref{app:pac-bayes}.

\begin{theorem}{(Generalization Bound)}
\label{thm:bound}
Let \(\mathcal{Q}, Q\) be  arbitrary distribution of \(\phi, \theta\), which are defined in \cref{eq:outer-loop} and \cref{eq:inner-loop}, and \(\mathcal{P}, P\) be the prior distribution of \(\phi, \theta\). Then for any $c_1,c_2>0$ and any $\delta\in(0,1]$, with probability at least $1-\delta$, the following inequality holds uniformly for all hyper-posterior distributions $\gQ$,
\begin{align}
er(\mathcal{Q}) \leq \frac{c_1c_2}{(1-e^{-c_1})(1-e^{-c_2})}\cdot \frac{1}{n}\sum_{k=1}^{n}\hat{er}(\mathcal{Q}, \gS_k) \nonumber\\
+ \frac{c_1}{1-e^{-c_1}} \cdot  \frac{\mathrm{KL}(\gQ||\gP)+\log \frac{2}{\delta}}{nc_1} \nonumber\\
+ \frac{c_1c_2}{(1-e^{-c_2})(1-e^{-c_1})} \cdot \frac{\mathrm{KL}(Q||P)+\log \frac{2n}{\delta}}{(H+L)c_2}
\end{align}
where \(er(\gQ)\) and \(\hat{er}(\gQ, \gS_k)\) are the generalization error and training error of {\shortname}, respectively.
\end{theorem}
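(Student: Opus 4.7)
The plan is to follow the two-level PAC-Bayes meta-learning framework of Amit and Meir, adapted to the time-series setting. The workhorse at both levels is the Catoni-style change-of-measure inequality: for any $c>0$, any prior, and any data-dependent posterior, the expected risk under the posterior is bounded by a rescaling $c/(1-e^{-c})$ of the expected empirical risk plus a $\mathrm{KL}/c$ term (with an additional $\log(1/\delta)$ confidence term). The factors $\tfrac{c_1}{1-e^{-c_1}}$ and $\tfrac{c_2}{1-e^{-c_2}}$ in the statement are exactly the signatures of applying this inequality once at each level, so the structure of the bound essentially dictates the proof skeleton.

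First I would set up the Bayesian reinterpretation of \cref{eq:outer-loop,eq:inner-loop}. The inner-loop solution $\theta_t^*$ is replaced by a posterior $Q$ (which may depend on $\phi$ and on the lookback window $\mY_{\lb}$), with prior $P$; the outer-loop meta-parameter $\phi^*$ is replaced by a hyper-posterior $\mathcal{Q}$ with hyper-prior $\mathcal{P}$. Define the per-instance training risk $\hat{er}_k(\theta,\phi)$ as the average of $\gL(f(\vtau_{k+i};\theta,\phi),\vy_{k+i})$ over the $L+H$ time steps of $\gS_k$, and the per-instance true risk $er_k$ as its expectation; the quantities $er(\gQ)$ and $\hat{er}(\gQ,\gS_k)$ are then the corresponding double expectations over $(\phi,\theta)\sim \gQ\otimes Q$.

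Next I would apply the single-instance (base-level) PAC-Bayes inequality. Conditioning on a fixed $\phi$, the $L+H$ time-indexed losses within instance $\gS_k$ serve as the sample, so a Catoni-style bound with parameter $c_2$ gives, with probability at least $1-\delta/(2n)$,
\begin{equation*}
er_k(Q,\phi) \;\le\; \tfrac{c_2}{1-e^{-c_2}}\,\hat{er}_k(Q,\phi) \;+\; \tfrac{\mathrm{KL}(Q\|P)+\log(2n/\delta)}{(H+L)c_2}.
\end{equation*}
A union bound over $k=1,\dots,n$ makes this hold simultaneously for all instances, and averaging yields an inequality between $\tfrac{1}{n}\sum_k er_k(Q,\phi)$ and $\tfrac{1}{n}\sum_k \hat{er}_k(Q,\phi)$ plus the second KL term in the theorem. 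I would then apply the meta-level Catoni inequality with parameter $c_1$ to the $n$ instances viewed as draws at the meta-level: with probability at least $1-\delta/2$,
\begin{equation*}
er(\gQ) \;\le\; \tfrac{c_1}{1-e^{-c_1}}\,\mathbb{E}_{\phi\sim\gQ}\Bigl[\tfrac{1}{n}\sum_k er_k(Q,\phi)\Bigr] \;+\; \tfrac{\mathrm{KL}(\gQ\|\gP)+\log(2/\delta)}{nc_1}.
\end{equation*}
Substituting the base-level bound into the right-hand side, taking the expectation over $\phi\sim\gQ$ of the KL term (which is permitted since Catoni's inequality is uniform in $\phi$), and collecting terms produces the stated three-term bound, with the product factor $\tfrac{c_1 c_2}{(1-e^{-c_1})(1-e^{-c_2})}$ multiplying the empirical risk and the base-level KL remainder.

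The main obstacle I expect is justifying the meta-level application of PAC-Bayes: the $n$ sliding-window instances $\gS_1,\dots,\gS_n$ extracted from a single long time series are not i.i.d., which violates the textbook Amit--Meir setting that assumes i.i.d.\ tasks. I would handle this either by positing a task distribution from which each lookback-horizon pair is drawn (as the theorem's wording implicitly does), or by appealing to a stationarity/mixing assumption so that the concentration arguments underlying Catoni's bound still go through up to constants absorbed into $c_1$. A secondary, more routine point is verifying that the change-of-measure inequality is applied to a bounded (or sub-Gaussian) loss, which may require an assumption on $\gL$ that should be stated; and checking that the meta-posterior can depend on the data through $\gQ$ while the base-level bound still holds uniformly, which is exactly why the $\log(2n/\delta)$ factor (rather than $\log(2/\delta)$) appears in the last term.
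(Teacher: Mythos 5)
Your proposal matches the paper's proof essentially step for step: the paper likewise applies Catoni's PAC-Bayes bound twice --- once within each instance with parameter $c_2$ at confidence $\delta_k = \delta/(2n)$, once at the environment (meta) level with parameter $c_1$ at confidence $\delta_0 = \delta/2$ --- and then combines the two via a union bound and substitution, yielding exactly the product factor $\tfrac{c_1c_2}{(1-e^{-c_1})(1-e^{-c_2})}$ on the empirical risk and on the base-level KL term. The non-i.i.d.\ obstacle you flag is resolved in the paper simply by assumption rather than by a mixing argument: each $\gS_k$ is posited to be drawn i.i.d.\ from $\gD_k^{H+L}$, the $\gD_k$ i.i.d.\ from a meta-distribution $E$, and the loss is assumed bounded in $[0,1]$, so your only substantive deviations (worrying about stationarity and boundedness) are handled there as hypotheses, not proof steps.
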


\cref{thm:bound} states that the expected generalization error of DeepTime is bounded by the empirical error plus two complexity terms.
The first term represents the complexity between the meta distributions, \(\gQ, \gP\), as well as between instances, converging to zero if we observe an infinitely long time-series ($n \rightarrow \infty$).
The second term represents the complexity between the base distributions, \(Q, P\), and of each instance, or equivalently, the lookback window and forecast horizon. This term converges to zero when there are a sufficient number of time steps in each instance ($H + L \rightarrow \infty$).\looseness=-2
\newpage
\section{Experiments}
\label{sec:experiments}
We evaluate {\shortname} on both synthetic, and real-world datasets. We ask the following questions: 
(i) Is {\shortname}, trained on a family of functions following the same parametric form, able to perform extrapolation on unseen functions?
(ii) How does {\shortname} compare to other forecasting models on real-world data?
(iii) What are the key contributing factors to the good performance of {\shortname}? 

\begin{figure*}[t]
\centering
\begin{subfigure}[b]{.32\textwidth}
    \centering
    \includegraphics[width=\textwidth]{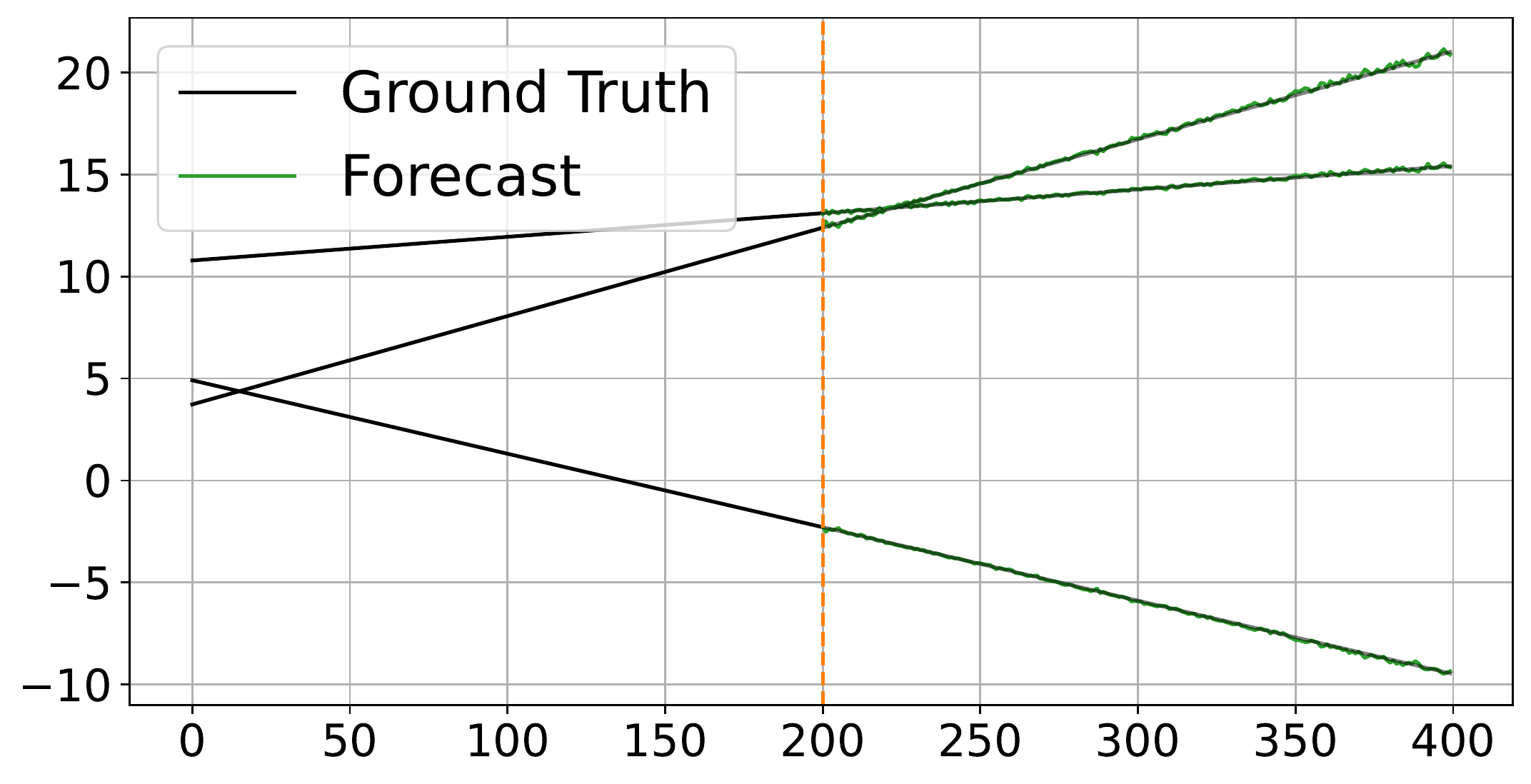}
    \caption{Linear}
    \label{subfig:synthetic-linear}
\end{subfigure}
\begin{subfigure}[b]{.32\textwidth}
    \centering
    \includegraphics[width=\textwidth]{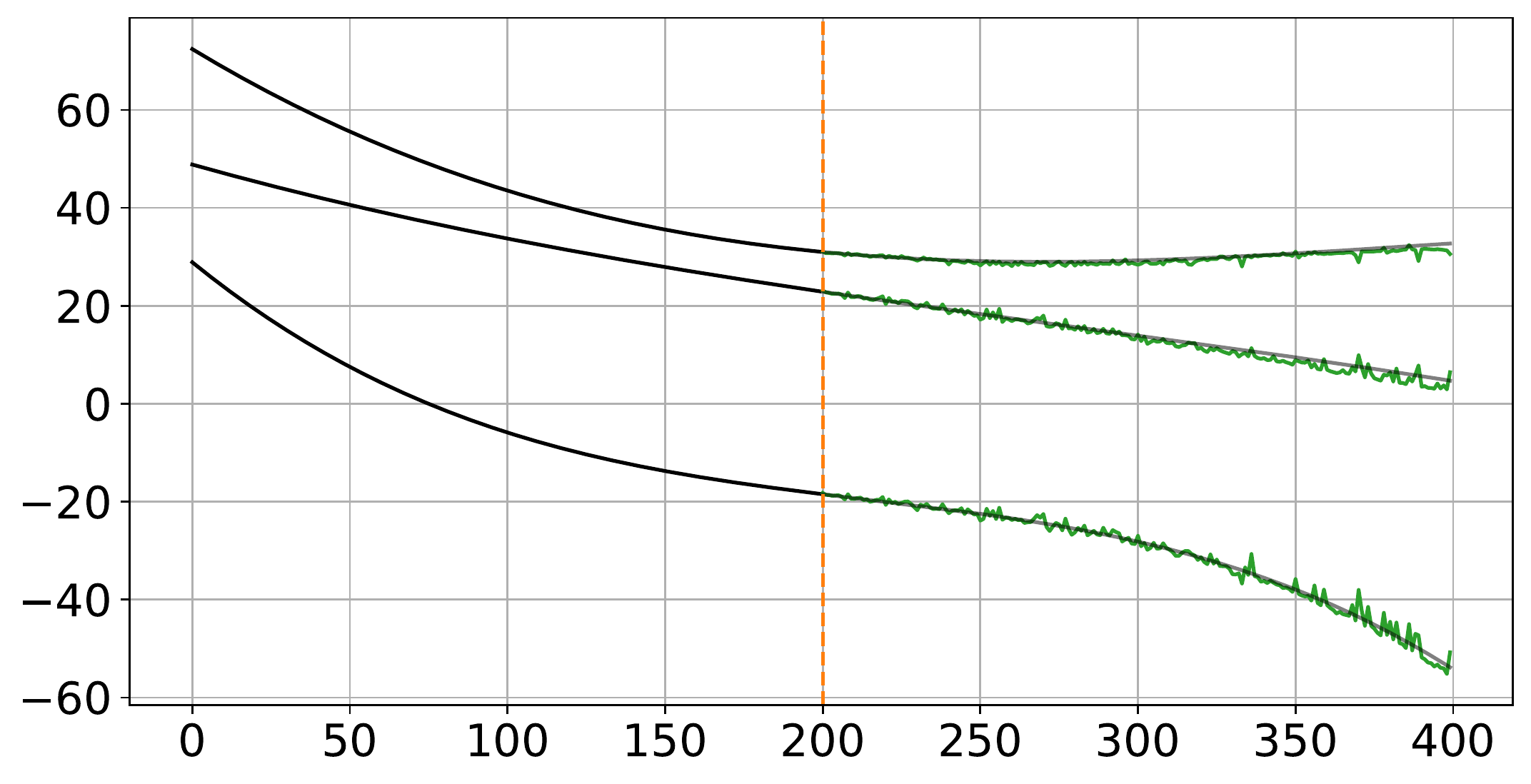}
    \caption{Cubic}
    \label{subfig:synthetic-cubic}
\end{subfigure}
\begin{subfigure}[b]{.32\textwidth}
    \centering
    \includegraphics[width=\textwidth]{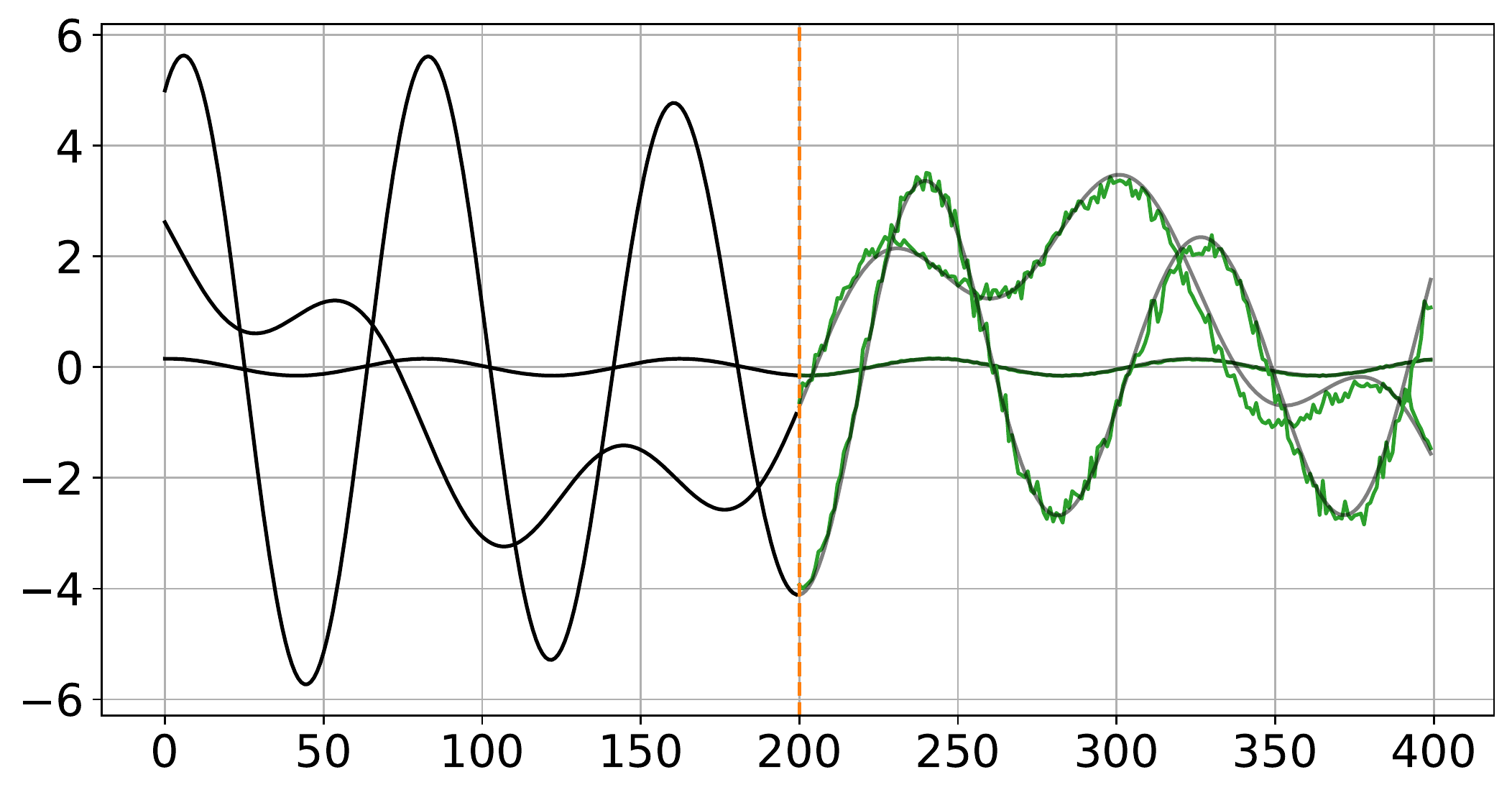}
    \caption{Sum of Sinusoids}
    \label{subfig:synthetic-sine}
\end{subfigure}
\caption{
Predictions of {\shortname} on three \textbf{unseen} functions for each function class. The \textcolor{orange}{orange line} represents the split between lookback window and forecast horizon.
}
\label{fig:synthetic}
\end{figure*}
\begin{table*}[t]
  \centering
  \caption{Multivariate forecasting benchmark on long sequence time-series forecasting. Best results are highlighted in \textbf{bold}, and second best results are \underline{underlined}.}
  \vskip -0.1in
  \resizebox{\textwidth}{!}{
\begin{tabular}{c|c|cccccccccccccccccc}
\multicolumn{1}{r}{} & \multicolumn{1}{r}{} &       &       &       &       &       &       &       &       &       &       &       &       &       &       &       &       &       &  \\
\midrule
\multicolumn{2}{c}{Methods} & \multicolumn{2}{c}{{\shortname}} & \multicolumn{2}{c}{NS Transformer} & \multicolumn{2}{c}{N-HiTS} & \multicolumn{2}{c}{ETSformer} & \multicolumn{2}{c}{FEDformer} & \multicolumn{2}{c}{Autoformer} & \multicolumn{2}{c}{Informer} & \multicolumn{2}{c}{LogTrans} & \multicolumn{2}{c}{GP} \\
\midrule
\multicolumn{2}{c}{Metrics} & MSE   & MAE   & MSE   & MAE   & MSE   & MAE   & MSE   & MAE   & MSE   & MAE   & MSE   & MAE   & MSE   & MAE   & MSE   & MAE   & MSE   & MAE \\
\midrule
\multirow{4}[2]{*}{\begin{sideways}ETTm2\end{sideways}} & 96    & \textbf{0.166} & \underline{0.257} & 0.192 & 0.274 & \underline{0.176} & \textbf{0.255} & 0.189 & 0.280 & 0.203 & 0.287 & 0.255 & 0.339 & 0.365 & 0.453 & 0.768 & 0.642 & 0.442 & 0.422 \\
      & 192   & \textbf{0.225} & \textbf{0.302} & 0.280 & 0.339 & \underline{0.245} & \underline{0.305} & 0.253 & 0.319 & 0.269 & 0.328 & 0.281 & 0.340 & 0.533 & 0.563 & 0.989 & 0.757 & 0.605 & 0.505 \\
      & 336   & \textbf{0.277} & \textbf{0.336} & 0.334 & 0.361 & \underline{0.295} & \underline{0.346} & 0.314 & 0.357 & 0.325 & 0.366 & 0.339 & 0.372 & 1.363 & 0.887 & 1.334 & 0.872 & 0.731 & 0.569 \\
      & 720   & \textbf{0.383} & \textbf{0.409} & 0.417 & 0.413 & \underline{0.401} & 0.426 & 0.414 & \underline{0.413} & 0.421 & 0.415 & 0.422 & 0.419 & 3.379 & 1.388 & 3.048 & 1.328 & 0.959 & 0.669 \\
\midrule
\multirow{4}[2]{*}{\begin{sideways}ECL\end{sideways}} & 96    & \textbf{0.137} & \textbf{0.238} & 0.169 & 0.273 & \underline{0.147} & \underline{0.249} & 0.187 & 0.304 & 0.183 & 0.297 & 0.201 & 0.317 & 0.274 & 0.368 & 0.258 & 0.357 & 0.503 & 0.538 \\
      & 192   & \textbf{0.152} & \textbf{0.252} & 0.182 & 0.286 & \underline{0.167} & \underline{0.269} & 0.199 & 0.315 & 0.195 & 0.308 & 0.222 & 0.334 & 0.296 & 0.386 & 0.266 & 0.368 & 0.505 & 0.543 \\
      & 336   & \textbf{0.166} & \textbf{0.268} & 0.200 & 0.304 & \underline{0.186} & \underline{0.290} & 0.212 & 0.329 & 0.212 & 0.313 & 0.231 & 0.338 & 0.300 & 0.394 & 0.280 & 0.380 & 0.612 & 0.614 \\
      & 720   & \textbf{0.201} & \textbf{0.302} & \underline{0.222} & \underline{0.321} & 0.243 & 0.340 & 0.233 & 0.345 & 0.231 & 0.343 & 0.254 & 0.361 & 0.373 & 0.439 & 0.283 & 0.376 & 0.652 & 0.635 \\
\midrule
\multirow{4}[2]{*}{\begin{sideways}Exchange\end{sideways}} & 96    & \textbf{0.081} & \underline{0.205} & 0.111 & 0.237 & 0.092 & 0.211 & \underline{0.085} & \textbf{0.204} & 0.139 & 0.276 & 0.197 & 0.323 & 0.847 & 0.752 & 0.968 & 0.812 & 0.136 & 0.267 \\
      & 192   & \textbf{0.151} & \textbf{0.284} & 0.219 & 0.335 & 0.208 & 0.322 & \underline{0.182} & \underline{0.303} & 0.256 & 0.369 & 0.300 & 0.369 & 1.204 & 0.895 & 1.040 & 0.851 & 0.229 & 0.348 \\
      & 336   & \textbf{0.314} & \textbf{0.412} & 0.421 & 0.476 & 0.371 & 0.443 & \underline{0.348} & \underline{0.428} & 0.426 & 0.464 & 0.509 & 0.524 & 1.672 & 1.036 & 1.659 & 1.081 & 0.372 & 0.447 \\
      & 720   & \textbf{0.856} & \textbf{0.663} & 1.092 & 0.769 & \underline{0.888} & \underline{0.723} & 1.025 & 0.774 & 1.090 & 0.800 & 1.447 & 0.941 & 2.478 & 1.310 & 1.941 & 1.127 & 1.135 & 0.810 \\
\midrule
\multirow{4}[2]{*}{\begin{sideways}Traffic\end{sideways}} & 96    & \textbf{0.390} & \textbf{0.275} & 0.612 & 0.338 & \underline{0.402} & \underline{0.282} & 0.607 & 0.392 & 0.562 & 0.349 & 0.613 & 0.388 & 0.719 & 0.391 & 0.684 & 0.384 & 1.112 & 0.665 \\
      & 192   & \textbf{0.402} & \textbf{0.278} & 0.613 & 0.340 & \underline{0.420} & \underline{0.297} & 0.621 & 0.399 & 0.562 & 0.346 & 0.616 & 0.382 & 0.696 & 0.379 & 0.685 & 0.390 & 1.133 & 0.671 \\
      & 336   & \textbf{0.415} & \textbf{0.288} & 0.618 & 0.328 & \underline{0.448} & \underline{0.313} & 0.622 & 0.396 & 0.570 & 0.323 & 0.622 & 0.337 & 0.777 & 0.420 & 0.733 & 0.408 & 1.274 & 0.723 \\
      & 720   & \textbf{0.449} & \textbf{0.307} & 0.653 & 0.355 & \underline{0.539} & \underline{0.353} & 0.632 & 0.396 & 0.596 & 0.368 & 0.660 & 0.408 & 0.864 & 0.472 & 0.717 & 0.396 & 1.280 & 0.719 \\
\midrule
\multirow{4}[2]{*}{\begin{sideways}Weather\end{sideways}} & 96    & \underline{0.166} & \underline{0.221} & 0.173 & 0.223 & \textbf{0.158} & \textbf{0.195} & 0.197 & 0.281 & 0.217 & 0.296 & 0.266 & 0.336 & 0.300 & 0.384 & 0.458 & 0.490 & 0.395 & 0.356 \\
      & 192   & \textbf{0.207} & \underline{0.261} & 0.245 & 0.285 & \underline{0.211} & \textbf{0.247} & 0.237 & 0.312 & 0.276 & 0.336 & 0.307 & 0.367 & 0.598 & 0.544 & 0.658 & 0.589 & 0.450 & 0.398 \\
      & 336   & \textbf{0.251} & \textbf{0.298} & 0.321 & 0.338 & \underline{0.274} & \underline{0.300} & 0.298 & 0.353 & 0.339 & 0.380 & 0.359 & 0.359 & 0.578 & 0.523 & 0.797 & 0.652 & 0.508 & 0.440 \\
      & 720   & \textbf{0.301} & \textbf{0.338} & 0.414 & 0.410 & \underline{0.351} & \underline{0.353} & 0.352 & 0.388 & 0.403 & 0.428 & 0.419 & 0.419 & 1.059 & 0.741 & 0.869 & 0.675 & 0.498 & 0.450 \\
\midrule
\multirow{4}[2]{*}{\begin{sideways}ILI\end{sideways}} & 24    & 2.425 & 1.086 & 2.294 & \underline{0.945} & \textbf{1.862} & \textbf{0.869} & 2.527 & 1.020 & \underline{2.203} & 0.963 & 3.483 & 1.287 & 5.764 & 1.677 & 4.480 & 1.444 & 2.331 & 1.036 \\
      & 36    & 2.231 & 1.008 & \textbf{1.825} & \textbf{0.848} & \underline{2.071} & \underline{0.969} & 2.615 & 1.007 & 2.272 & 0.976 & 3.103 & 1.148 & 4.755 & 1.467 & 4.799 & 1.467 & 2.167 & 1.002 \\
      & 48    & 2.230 & 1.016 & \textbf{2.010} & \textbf{0.900} & 2.346 & 1.042 & 2.359 & \underline{0.972} & \underline{2.209} & 0.981 & 2.669 & 1.085 & 4.763 & 1.469 & 4.800 & 1.468 & 2.961 & 1.180 \\
      & 60    & \textbf{2.143} & \underline{0.985} & \underline{2.178} & \textbf{0.963} & 2.560 & 1.073 & 2.487 & 1.016 & 2.545 & 1.061 & 2.770 & 1.125 & 5.264 & 1.564 & 5.278 & 1.560 & 3.108 & 1.214 \\
\bottomrule
\end{tabular}%
  }
  \label{tab:main-multi}%
\end{table*}%

\subsection{Experiments on Synthetic Data}
We first consider {\shortname}'s ability to extrapolate on the following functions specified by some parametric form:
(i) the family of linear functions, \(y = ax + b\),
(ii) the family of cubic functions, \(y = ax^3 + bx^2 + cx + d\), and
(iii) sums of sinusoids, \(\sum_j A_j \sin(\omega_j x + \varphi_j)\). 
Parameters of the functions (i.e. \(a, b, c, d, A_j, \omega_j, \varphi_j\)) are sampled randomly (further details in \cref{app:synthetic}) to construct distinct tasks. A total of 400 time steps are sampled, with a lookback window length of 200 and forecast horizon of 200.
\cref{fig:synthetic} demonstrates that {\shortname} is able to perform extrapolation on unseen test functions/tasks after being trained via our meta-optimization formulation. It demonstrates an ability to approximate and adapt, based on the lookback window, linear and cubic polynomials, and even sums of sinusoids.
Next, we evaluate {\shortname} on real-world datasets, against state-of-the-art baselines.

\subsection{Experiments on Real-world Data}
Experiments are performed on 6 real-world datasets -- Electricity Transformer Temperature (ETT), Electricity Consuming Load (ECL), Exchange, Traffic, Weather, and Influenza-like Illness (ILI) with full details in \cref{app:datasets}.
We evaluate the performance of our proposed approach using two metrics, the mean squared error (MSE) and mean absolute error (MAE) metrics. The datasets are split into train, validation, and test sets chronologically, following a 70/10/20 split for all datasets except for \textit{ETTm2} which follows a 60/20/20 split, as per convention. The univariate benchmark selects the last index of the multivariate dataset as the target variable, following previous work \citep{xu2021autoformer}. Preprocessing on the data is performed by standardization based on train set statistics. Hyperparameter selection is performed on only one value, the lookback length multiplier, \(L = \mu * H\), which decides the length of the lookback window. We search through the values \(\mu = [1, 3, 5, 7, 9]\), and select the best value based on the validation loss. 
Further implementation details on {\shortname} are reported in \cref{app:implementation}, and detailed hyperparameters are reported in \cref{app:hyperparams}. Reported results for {\shortname} are averaged over three runs, and standard deviation is reported in \cref{app:sd}. 

\paragraph{Results}
We compare {\shortname} to the following baselines for the multivariate setting, N-HiTS \citep{challu2022nhits}, ETSformer \citep{woo2022etsformer}, Fedformer \citep{zhou2022fedformer} (we report the best score for each setting from the two variants they present), Autoformer \citep{xu2021autoformer}, Informer \citep{zhou2021informer}, LogTrans \citep{li2019enhancing}, Non-stationary (NS) Transformer \citep{liu2022non}, and Gaussian Process (GP) \citep{rasmussen2003gaussian}. 
For the univariate setting, we include additional univariate forecasting models, N-BEATS \citep{oreshkin2020nbeats}, DeepAR \citep{salinas2020deepar}, Prophet \citep{taylor2018forecasting}, and ARIMA. 
Baseline results are obtained from the respective papers.
\cref{tab:main-multi} and \cref{tab:main-uni} (in \cref{app:univar-benchmark} for space) summarizes the multivariate and univariate forecasting results respectively. {\shortname} achieves state-of-the-art performance on 20 out of 24 settings in MSE, and 17 out of 24 settings in MAE on the multivariate benchmark, and also achieves competitive results on the univariate benchmark despite its simple architecture compared to the baselines comprising complex fully connected architectures and computationally intensive Transformer architectures. 

\subsection{Empirical Analysis and Ablation Studies}
\label{subsec:ablation}
\begin{figure}[t]
    \centering
    \includegraphics[width=\columnwidth]{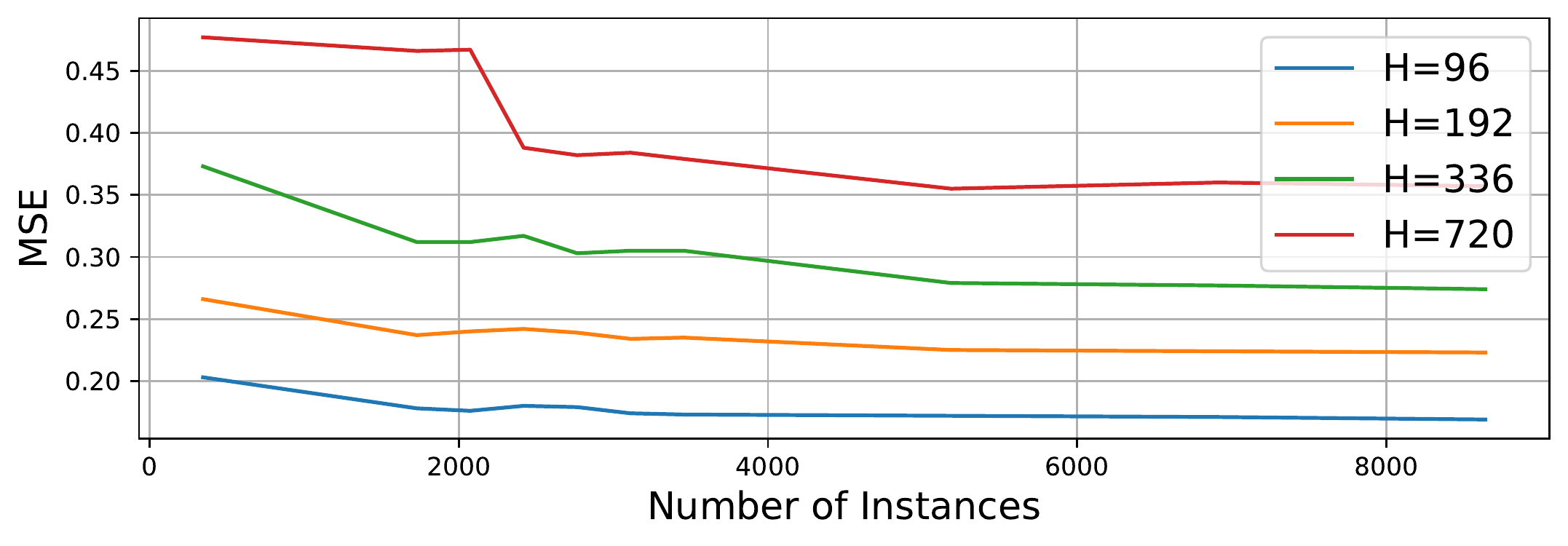}
    \caption{
    Analysis on the number of instances, \(n\). 
    MSE is measured as number of instances increases for multiple horizon lengths.
    Analysis is performed based on the ETTm2 dataset.
    }
    \label{fig:instances}
    \vspace{-0.1in}
\end{figure}
\begin{table}[t]
  \centering
  \caption{
  Analysis on the lookback window length, based on a multiplier on horizon length, \(L = \mu * H\). Results presented on the ETTm2 dataset. Best results are highlighted in \textbf{bold}.}
  \resizebox{\columnwidth}{!}{
    \begin{tabular}{c|cccccccc}
    \toprule
    \multicolumn{1}{c}{Horizon} & \multicolumn{2}{c}{96} & \multicolumn{2}{c}{192} & \multicolumn{2}{c}{336} & \multicolumn{2}{c}{720} \\
    \midrule
    \multicolumn{1}{c}{\(\mu\)} & MSE   & MAE   & MSE   & MAE   & MSE   & MAE   & MSE   & MAE \\
    \midrule
    1     & 0.192 & 0.287 & 0.255 & 0.332 & 0.294 & 0.354 & 0.383 & 0.409 \\
    3     & 0.172 & 0.264 & 0.228 & 0.304 & 0.277 & \textbf{0.336} & \textbf{0.371} & \textbf{0.403} \\
    5     & 0.168 & 0.259 & 0.225 & 0.302 & \textbf{0.275} & 0.337 & 0.389 & 0.420 \\
    7     & 0.166 & \textbf{0.257} & \textbf{0.223} & \textbf{0.300} & 0.279 & 0.343 & 0.440 & 0.451 \\
    9     & \textbf{0.165} & 0.258 & \textbf{0.223} & 0.301 & 0.285 & 0.350 & 0.409 & 0.434 \\
    \bottomrule
    \end{tabular}%
  }
  \label{tab:ll}%
\end{table}%
\begin{table}[t]
  \centering
  \caption{Ablation study on backbone models. {\shortname} refers to our proposed approach, an INR with random Fourier features sampled from a range of scales. MLP refers to replacing the random Fourier features with a linear map from input dimension to hidden dimension. SIREN refers to an INR with periodic activations as proposed by \citet{sitzmann2020implicit}. RNN refers to an autoregressive recurrent neural network (inputs are the time-series values, \(\vy_t\)). All approaches include differentiable closed-form ridge regressor.
  Further model details can be found in \cref{app:ablation-model}.
  }
  \resizebox{\columnwidth}{!}{
    \begin{tabular}{c|c|cccccccc}
    \toprule
    \multicolumn{2}{c}{Methods} & \multicolumn{2}{c}{{\shortname}} & \multicolumn{2}{c}{MLP} & \multicolumn{2}{c}{SIREN} & \multicolumn{2}{c}{RNN} \\
    \midrule
    \multicolumn{2}{c}{Metrics} & MSE   & MAE   & MSE   & MAE   & MSE   & MAE   & MSE   & MAE \\
    \midrule
    \multirow{4}[2]{*}{\begin{sideways}ETTm2\end{sideways}} & 96    & \textbf{0.166} & \textbf{0.257} & 0.186 & 0.284 & 0.236 & 0.325 & 0.233 & 0.324 \\
          & 192   & \textbf{0.225} & \textbf{0.302} & 0.265 & 0.338 & 0.295 & 0.361 & 0.275 & 0.337 \\
          & 336   & \textbf{0.277} & \textbf{0.336} & 0.316 & 0.372 & 0.327 & 0.386 & 0.344 & 0.383 \\
          & 720   & \textbf{0.383} & \textbf{0.409} & 0.401 & 0.417 & 0.438 & 0.453 & 0.431 & 0.432 \\
    \bottomrule
    \end{tabular}%
  }
  \label{tab:ablation-model}%
\end{table}%

\begin{table*}[t]
  \centering
  \caption{Ablation study on variants of {\shortname}. 
  Starting from the original version, we add (+) or remove (-) some component from {\shortname}.
  \textit{Datetime} refers to datetime features. 
  \textit{RR} stands for the differentiable closed-form \textbf{r}idge \textbf{r}egressor, removing it refers to replacing this module with a simple linear layer trained via gradient descent across all training samples (i.e. without meta-optimization formulation).
  \textit{Local} refers to training an INR from scratch via gradient descent for each lookback window (RR is \textbf{not} used here, and there is no training phase). 
  + \textit{Finetune} refers to training an INR via gradient descent for each lookback window on top of having a training phase. 
  \textit{Full MAML} refers to performing gradient steps for the inner loop and backpropagating through them for the outer loop as in \cite{finn2017model}.
  Further details on the variants can be found in \cref{app:ablation-variants}.
  }
  \resizebox{\textwidth}{!}{
    \begin{tabular}{c|c|cccccccccccccccccc}
    \toprule
    \multicolumn{2}{c}{\multirow{2}[1]{*}{Methods}} &
      \multicolumn{2}{c}{\multirow{2}[1]{*}{{\shortname}}} &
      \multicolumn{2}{c}{\multirow{2}[1]{*}{+ Datetime}} &
      \multicolumn{2}{c}{\multirow{2}[1]{*}{- RR}} &
      \multicolumn{2}{c}{- RR} &
      \multicolumn{2}{c}{\multirow{2}[1]{*}{+ Local}} &
      \multicolumn{2}{c}{+ Local} &
      \multicolumn{2}{c}{\multirow{2}[1]{*}{+ Finetune}} &
      \multicolumn{2}{c}{+ Finetune} &
      \multicolumn{2}{c}{\multirow{2}[1]{*}{Full MAML}}
      \\
    \multicolumn{2}{c}{} &
      \multicolumn{2}{c}{} &
      \multicolumn{2}{c}{} &
      \multicolumn{2}{c}{} &
      \multicolumn{2}{c}{+ Datetime} &
      \multicolumn{2}{c}{} &
      \multicolumn{2}{c}{+ Datetime} &
      \multicolumn{2}{c}{} &
      \multicolumn{2}{c}{+ Datetime} &
      \multicolumn{2}{c}{}
      \\
    \midrule
    \multicolumn{2}{c}{Metrics} &
      MSE &
      MAE &
      MSE &
      MAE &
      MSE &
      MAE &
      MSE &
      MAE &
      MSE &
      MAE &
      MSE &
      MAE &
      MSE &
      MAE &
      MSE &
      MAE &
      MSE &
      MAE
      \\
    \midrule
    \multirow{4}[2]{*}{\begin{sideways}ETTm2\end{sideways}} &
      96 &
      \textbf{0.166} &
      \textbf{0.257} &
      0.226 &
      0.303 &
      3.072 &
      1.345 &
      3.393 &
      1.400 &
      0.251 &
      0.331 &
      0.250 &
      0.327 &
      3.028 &
      1.328 &
      3.242 &
      1.365 &
      0.235 &
      0.326
      \\
     &
      192 &
      \textbf{0.225} &
      \textbf{0.302} &
      0.309 &
      0.362 &
      3.064 &
      1.343 &
      3.269 &
      1.381 &
      0.322 &
      0.371 &
      0.323 &
      0.366 &
      3.043 &
      1.341 &
      3.385 &
      1.391 &
      0.295 &
      0.361
      \\
     &
      336 &
      \textbf{0.277} &
      \textbf{0.336} &
      0.341 &
      0.381 &
      2.920 &
      1.309 &
      3.442 &
      1.401 &
      0.370 &
      0.412 &
      0.367 &
      0.396 &
      2.950 &
      1.331 &
      3.367 &
      1.387 &
      0.348 &
      0.392
      \\
     &
      720 &
      \textbf{0.383} &
      \textbf{0.409} &
      0.453 &
      0.447 &
      2.773 &
      1.273 &
      3.400 &
      1.399 &
      0.443 &
      0.449 &
      0.455 &
      0.461 &
      2.721 &
      1.253 &
      3.476 &
      1.407 &
      0.491 &
      0.484
      \\
    \bottomrule
    \end{tabular}%
  }
  \label{tab:ablation-variants}%
\end{table*}%

\begin{figure*}[t]
\centering
\begin{subfigure}[b]{\textwidth}
\centering
\includegraphics[width=0.49\columnwidth]{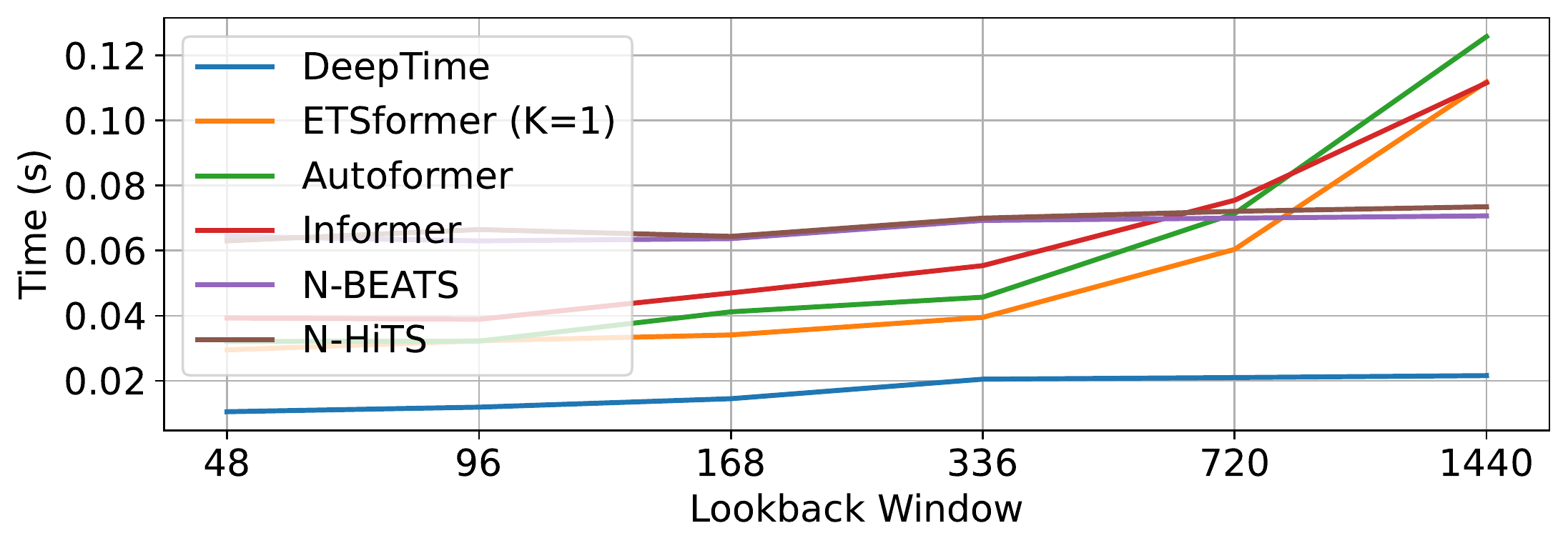}
\includegraphics[width=0.49\columnwidth]{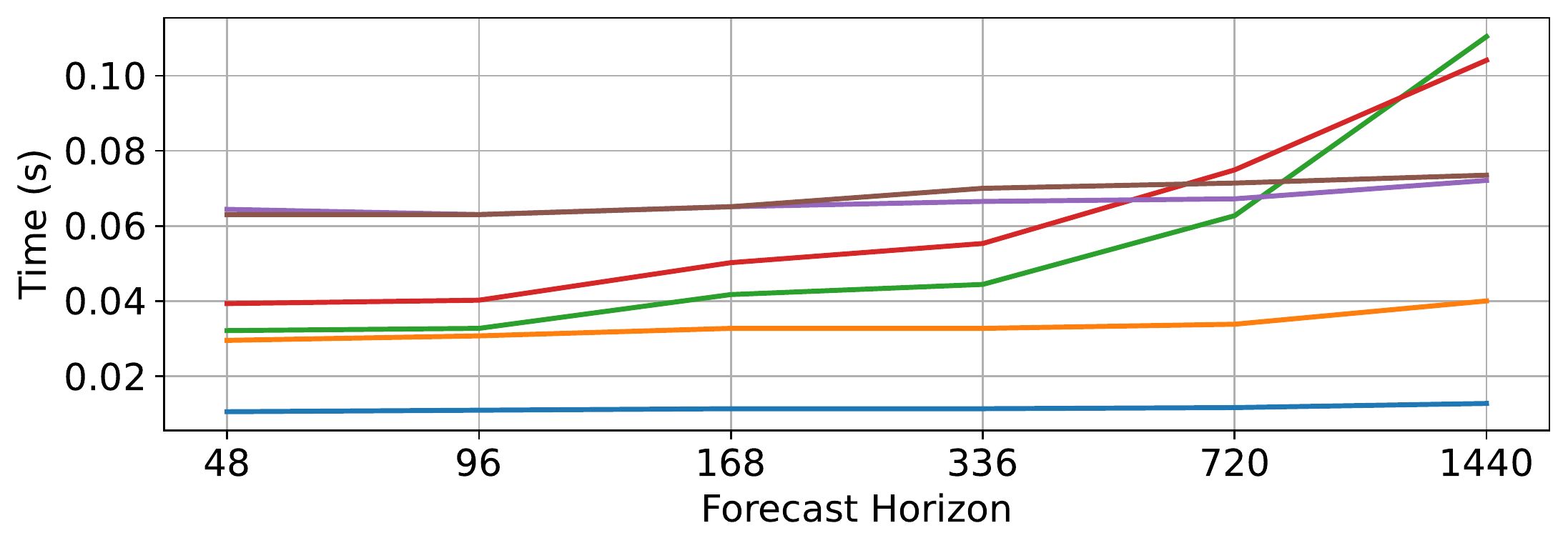}
\caption{Runtime Analysis}
\end{subfigure}
\begin{subfigure}[b]{\textwidth}
\centering
\includegraphics[width=0.49\columnwidth]{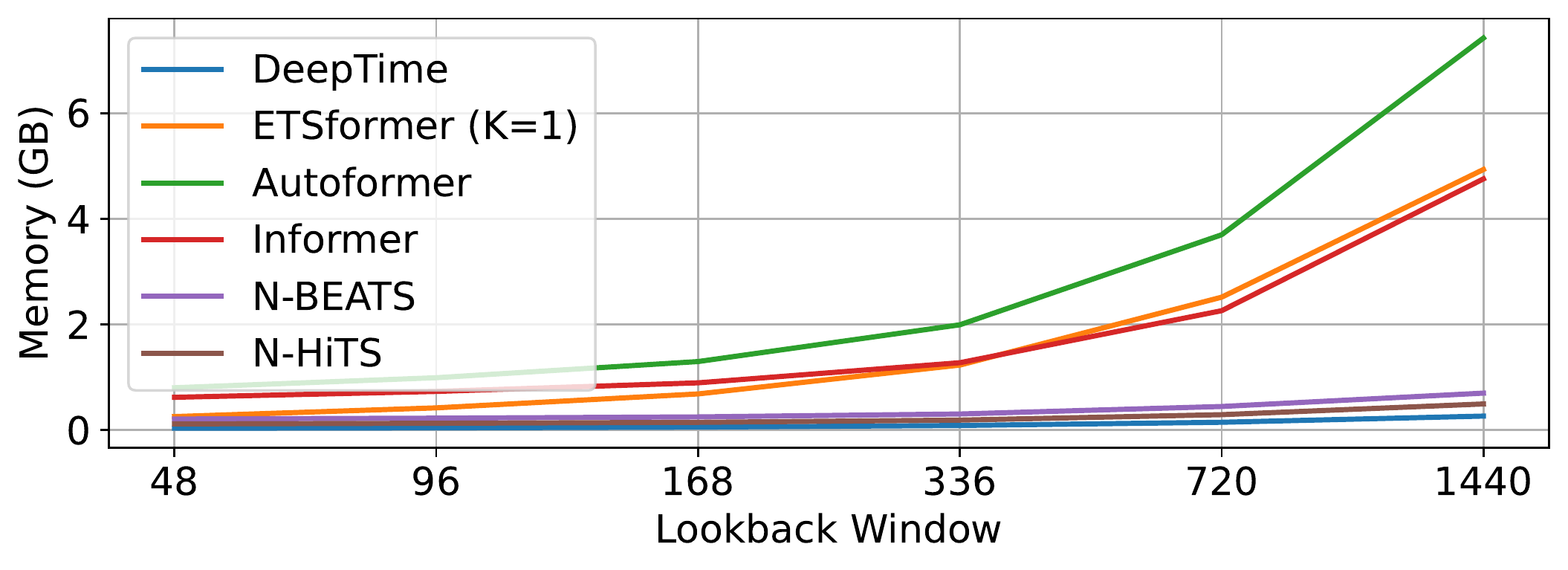}
\includegraphics[width=0.49\columnwidth]{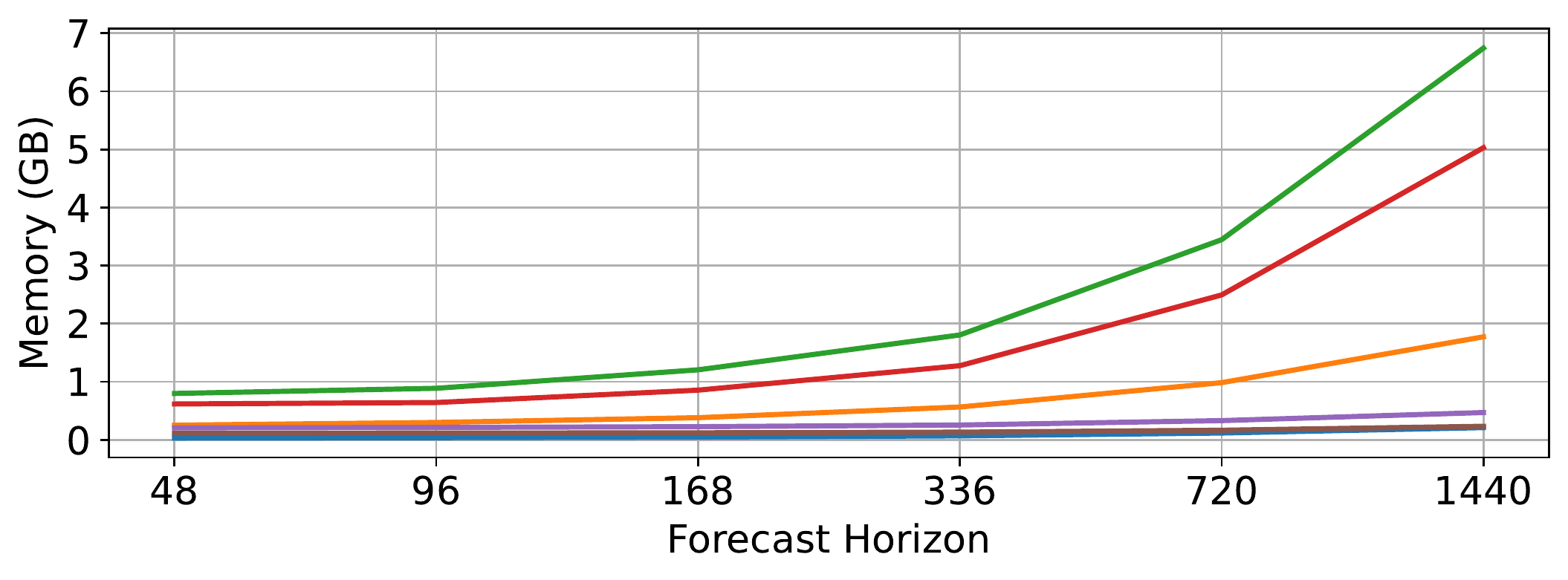}
\caption{Memory Analysis}
\end{subfigure}
\vskip -0.1in
\caption{Computational efficiency benchmark on the ETTm2 multivariate dataset, on a batch size of 32.
Runtime is measured for one iteration (forward + backward pass).
Left: Runtime/Memory usage as lookback length varies, horizon is fixed to 48. 
Right: Runtime/Memory usage as horizon length varies, lookback length is fixed to 48.
Further model details can be found in \cref{app:efficiency}.
}
\vspace{-0.1in}
\label{fig:efficiency}
\end{figure*}

We first perform empirical analyses informed by the insights from our theoretical analysis. 
\cref{thm:bound} states that generalization error is bounded by training error, and two complexity terms. \cref{fig:instances} and \cref{tab:ll} analyse the test error (which approximates generalization error) as number of instances, \(n\), and lookback window length, \(L\), vary.
For the first complexity term, controlled by denominator \(n\), observed in \cref{fig:instances}, test error decreases as \(n\) increases. 
For the second complexity term, controlled by the length of lookback and horizon, since \(H\) is an experimental setting, we set perform a sensitivity analysis on the lookback length, setting \(L = \mu * H\). Similarly, test error decreases as \(L\) increases, plateauing and even increasing slightly as \(L\) grows extremely large. We expect test error to plateau as the associated term goes to zero. As the number of instances available for training decreases as \(L\) grows large, the increase in test error can be attributed to a decrease in \(n\).

In \cref{tab:ablation-model} we perform an ablation study on various backbone architectures, while retaining the differentiable closed-form ridge regressor. We observe a degradation when the random Fourier features layer is removed, due to the spectral bias problem which neural networks face \citep{rahaman2019spectral, tancik2020fourier}. {\shortname} outperforms the SIREN variant of INRs which is consistent with observations INR literature. {\shortname} also outperforms the RNN variant which is the model proposed in \cite{grazzi2021meta}. This is a direct comparison between IMS historical-value models and time-index models, and highlights the benefits of a time-index models.

We perform an ablation study to understand how various training schemes and input features affect the performance of {\shortname}. \cref{tab:ablation-variants} presents these results. First, we observe that our meta-optimization formulation is a critical component to the success of {\shortname}. We note that {\shortname} without meta-optimization may not be a meaningful baseline since the model outputs are always the same regardless of the input lookback window. Including datetime features helps alleviate this issue, yet we observe that the inclusion of datetime features generally lead to a degradation in performance. In the case of {\shortname}, we observed that the inclusion of datetime features lead to a much lower training loss, but degradation in test performance -- this is a case of meta-learning memorization \citep{yin2020meta} due to the tasks becoming non-mutually exclusive \citep{rajendran2020meta}. We also observe that the meta-optimization formulation is indeed superior to training a model from scratch for each lookback window.
Finally, while we expect full MAML to always outperform the fast and efficient meta-optimization, in reality, there are many complications in such gradient-based bi-level optimization methods -- they are difficult to optimize, and instable during training. Restricting the base parameters to only the last layer of the INR provides a useful prior which enables stable optimization and high generalization without facing these problems.

Additional sensitivity analysis on our proposed concatenated Fourier features, can be found in \cref{app:cff}, showing that it performs no worse than an extensive hyperparameter sweep on standard random Fourier features layer.

\subsection{Computational Efficiency}
Finally, we analyse {\shortname}'s efficiency in both runtime and memory usage, with respect to both lookback window and forecast horizon lengths. 
The main bottleneck in computation for {\shortname} is the matrix inversion operation in the ridge regressor, canonically of \(\gO(n^3)\) complexity. This is a major concern for {\shortname} as \(n\) is linked to the length of the lookback window. As mentioned in \cite{bertinetto2018metalearning}, the Woodbury formulation, \[\mW^* = \mZ^T(\mZ\mZ^T + \lambda\mI)^{-1}\mY\] is used to alleviate the problem, leading to an \(\gO(d^3)\) complexity, where \(d\) is the hidden size hyperparameter, fixed to some value (see \cref{app:hyperparams}).
\cref{fig:efficiency} demonstrates that {\shortname} is highly efficient, even when compared to efficient Transformer models, recently proposed for the long sequence time series forecasting task, as well as fully connected models. 
\newpage
\section{Discussion}
In this paper, we proposed {\shortname}, a deep time-index model learned via a meta-optimization framework, to automatically learn a function form from time series data, rather than manually defining the representation function as per classical methods. 
{\shortname} resolves issues arising for vanilla deep time-index models by splitting the learning process into inner and outer learning processes, where the outer learning process enables the deep time-index model to learn a strong inductive bias for extrapolation from data.
We propose a fast and efficient instantiation of the meta-optimization framework, using a closed-form ridge regressor. We also enhance deep time-index models with a novel concatenated Fourier features module to efficiently learn high frequency patterns in time series.
Our extensive empirical analysis shows that {\shortname}, while being a much simpler model architecture compared to prevailing state-of-the-art methods, achieves competitive performance across forecasting benchmarks on real world datasets. We perform substantial ablation studies to identify the key components contributing to the success of {\shortname}, and also show that it is highly efficient.

\vspace{-0.1in}
\paragraph{Limitations \& Future Work} Despite having verified {\shortname}'s effectiveness, we expect some under-performance in cases where the lookback window contains significant anomalies, or an abrupt change point.
Next, while out of scope for our current work, a limitation that {\shortname} faces is that it does not consider holidays and events. We leave the consideration of such features as a potential future direction, along with the incorporation of exogenous covariates and datetime features, whilst avoiding the incursion of the meta-learning memorization problem.
Finally, time-index models are a natural fit for missing value imputation, as well as other time series intelligence tasks for irregular time series -- this is another interesting future direction to extend deep time-index models towards.


\newpage
\bibliography{icml2023}
\bibliographystyle{icml2023}

\newpage
\appendix
\onecolumn
\section{{\shortname} Pseudocode}
\label{app:pseudocode}
\begin{algorithm}[H]\small
  \caption{PyTorch-Style Pseudocode of Closed-Form Ridge Regressor}
  \label{alg:ridge}
  \fontsize{8pt}{0em}\selectfont 
  \texttt{mm}: matrix multiplication, \texttt{diagonal}: returns the diagonal elements of a matrix, \texttt{add\_}: in-place addition \\
  \texttt{linalg.solve} computes the solution of a square system of linear equations with a unique solution. 
\begin{lstlisting}[language=python]
# X: inputs, shape: (n_samples, n_dim)
# Y: targets, shape: (n_samples, n_out)
# lambd: scalar value representing the regularization coefficient

n_samples, n_dim = X.shape

# add a bias term by concatenating an all-ones vector
ones = torch.ones(n_samples, 1)
X = cat([X, ones], dim=-1)

if n_samples >= n_dim:
    # standard formulation
    A = mm(X.T, X)
    A.diagonal().add_(softplus(lambd))
    B = mm(X.T, Y)
    weights = linalg.solve(A, B)
else:
    # Woodbury formulation
    A = mm(X, X.T)
    A.diagonal().add_(softplus(lambd))
    weights = mm(X.T, linalg.solve(A, Y))
w, b = weights[:-1], weights[-1:]
return w, b
\end{lstlisting}
\end{algorithm}
\begin{algorithm}[H]\small
  \caption{PyTorch-Style Pseudocode of DeepTIMe}
  \label{alg:deeptime}
  \fontsize{8pt}{0em}\selectfont 
  \texttt{rearrange}: einops style tensor operations \\
  \texttt{mm}: matrix multiplication
\begin{lstlisting}[language=python]
# x: input time-series, shape: (lookback_len, multivariate_dim)
# lookback_len: scalar value representing the length of the lookback window
# horizon_len: scalar value representing the length of the forecast horizon
# inr: implicit neural representation

time_index = linspace(0, 1, lookback_len + horizon_len)  # shape: (lookback_len + horizon_len)
time_index = rearrange(time_index, 't -> t 1')  # shape: (lookback_len + horizon_len, 1)
time_reprs = inr(time_index)  # shape: (lookback_len + horizon_len, hidden_dim)

lookback_reprs = time_reprs[:lookback_len]
horizon_reprs = time_reprs[-horizon_len:]
w, b = ridge_regressor(lookback_reprs, x)
# w.shape = (hidden_dim, multivariate_dim), b.shape = (1, multivariate_dim)
preds = mm(horizon_reprs, w) + b
return preds
\end{lstlisting}
\end{algorithm}
\newpage
\section{Categorization of Forecasting Methods}
\label{app:categorization}
\begin{table}[h]
  \centering
  \caption{Categorization of time-series forecasting methods over the dimensions of time-index vs historical-value methods, and DMS vs IMS methods.}
    \begin{tabular}{|c|c|c|}
    \hline
          & \textbf{Time-index} & \textbf{Historical-value} \bigstrut\\
    \hline
    \multirow{6}[2]{*}{\begin{sideways}\textbf{DMS}\end{sideways}} & {\shortname} & N-HiTS \bigstrut[t]\\
          & Prophet & FEDformer \\
          & Gaussian process & ETSformer \\
          & Time-series regression & Autoformer \\
          &       & Informer \\
          &       & N-BEATS \bigstrut[b]\\
    \hline
    \multirow{4}[2]{*}{\begin{sideways}\textbf{IMS}\end{sideways}} & \multirow{3}[2]{*}{-} & DeepAR \bigstrut[t]\\
          &       & LogTrans \\
          &       & ARIMA \\
          &       & ETS \bigstrut[b]\\
    \hline
    \end{tabular}%
  \label{tab:categorization}%
\end{table}%

\paragraph{Multi-step Forecasts}
Forecasting over a horizon (multiple time steps) can be achieved via two strategies, direct multi-step, or iterative multi-step \citep{marcellino2006comparison, chevillon2007direct, taieb2012recursive}, or even a mixture of both, but this has been less explored:
\begin{itemize}
    \item \textbf{Direct Multi-step (DMS)}: A DMS forecaster directly predicts forecasts for the entire horizon. For example, to achieve a multi-step forecast of \(H\) time steps, a DMS forecaster simply outputs \(H\) values in a single forward pass.
    \item \textbf{Iterative Multi-step (IMS)}: An IMS forecaster iteratively predicts one step ahead, and consumes this forecast to make a subsequent prediction. This is performed iteratively, until the desired length is achieved.
\end{itemize}

\section{Further Discussion on {\shortname} as a Time-index Model}
\label{app:further}
We first reiterate our definitions of time-index and historical-value models from \cref{sec:intro}.
Time-index models are models whose predictions are \textit{purely} functions of \textit{current} time-index features. To perform forecasting (i.e. make predictions over some forecast horizon), time-index models make the predictions \(\hat{\vy}_{t+h} = f(\vtau_{t+h})\) for \(h = 0, \ldots, H - 1\).
Historical-value models predict the time-series value of future time step(s) as a function of past observations, and optionally, covariates.

\begin{minipage}[t]{.5\linewidth}
\center{\textbf{Time-index Models}}
\begin{equation*}
  \hat{\vy}_t = f(\vtau_t)
\end{equation*}
\end{minipage}%
\begin{minipage}[t]{.5\linewidth}
\center{\textbf{Historical-value Models}}
\begin{equation*}
    \hat{\vy}_{t+1} = f(\vy_t, \vy_{t-1}, \ldots, \vz_{t+1}, \vz_{t}, \ldots)
\end{equation*}
\end{minipage}%

Thus, forecasts are of the form, \(\hat{\vy}_{t+h} = \gA(f, \mY_{\lb})(\vtau_{t+h})\), and as can be seen, while the inner loop optimization step is a function of past observations, the adapted time-index model it yields is purely a function of time-index features. 

Next, we further discuss some subtleties of how time-index models interact with past observations. 
Some confusion regarding {\shortname}'s categorization as a time-index model may arise from the above simplified equation for predictions, since forecasts are now a function the lookback window due to the closed-form solution of \(\mW_t^{(K)*}\).
In particular, that \cref{eq:inr,eq:diff-closed-form} indicate that forecasts from {\shortname} are in fact linear in the lookback window. However, we highlight that this is not in contradiction with our definition of historical-value and time-index models. 
Here, we differentiate between the \textit{model}, \(f \in \gH\), and the \textit{learning algorithm}, \(\gA\), which is specified in \cref{eq:inner-loop} (the inner loop optimization). The learning algorithm \(\gA: \gH \times \R^{L \times m} \to \gH\) takes as input a model from the hypothesis class \(\gH\) and, past observations, returning a model minimizing the loss function \(\gL\). A time-index model is thus, still only a function of time-index features, while the learning algorithm is a function of past observations, i.e. \(f, f_0 \in \gH, f: \R^c \to \R^m, f = \gA(f_0, \mY_{\lb})\).
{\shortname} as a forecaster, is a \textbf{deep time-index model endowed with a meta-optimization framework}. In order to perform forecasting, it has to perform an inner loop optimization defined by the learning algorithm, as highlighted in \cref{eq:inner-loop}. For the special case where we use the closed-form ridge regressor, the inner loop learning algorithm reduces to a form which is linear in the lookback window. Still, the deep time-index model is only a function of time-index features.
\section{Generalization Bound for our Meta-optimization Framework}
\label{app:pac-bayes}
In this section, we derive a generalization bound for {\shortname} under the PAC-Bayes framework \citep{mcallester1999pac, shalev2014understanding}. Our formulation follows \citet{amit2018meta} which introduces a meta-learning generalization bound. 
We assume that all instances share the same  hypothesis space $\mathcal{H}$, sample space $\mathcal{Z}$ and loss function $\ell: \mathcal{H} \times \mathcal{Z} \rightarrow [0, 1]$. We observes $n$ instances in the form of sample sets $\gS_1,\dots, \gS_n$. The number of samples in each instance is $H+L$. Each instance $\gS_k$ is assumed to be generated \textit{i.i.d} from an unknown sample distribution $\gD^{H+L}_k$. Each instance's sample distribution $\gD_k$ is \textit{i.i.d.} generated from an unknown meta distribution, $E$. Particularly, we have $\gS_k = (z_{k-L}, \ldots, z_k, \ldots, z_{k+H-1})$, where $z_t=(\vtau_t, \vy_t)$. Here, $\vtau_t$ is the time coordinate, and $\vy_t$ is the time-series value. For any forecaster $h(\cdot)$ parameterized by $\theta$, we define the loss function $\ell(h_\theta, z_t)$. We also define $P$ as the prior distribution over $\mathcal{H}$ and $Q$ as the posterior over $\mathcal{H}$ for each instance. In the meta-learning setting, we assume a hyper-prior $\mathcal{P}$, which is a prior distribution over priors, observes a sequence of training instances, and then outputs a distribution over priors, called hyper-posterior $\mathcal{Q}$. 
We restate \cref{thm:bound} in the following:

\begin{theorem}{(Generalization Bound)}
Let \(\mathcal{Q}, Q\) be  arbitrary distribution of \(\phi, \theta\), which are defined in \cref{eq:outer-loop} and \cref{eq:inner-loop}, and \(\mathcal{P}, P\) be the prior distribution of \(\phi, \theta\). Then for any $c_1,c_2>0$ and any $\delta\in(0,1]$, with probability at least $1-\delta$, the following inequality holds uniformly for all hyper-posterior distributions $\gQ$,
\begin{align}
er(\mathcal{Q}) \leq \frac{c_1c_2}{(1-e^{-c_1})(1-e^{-c_2})}\cdot \frac{1}{n}\sum_{k=1}^{n}\hat{er}(\mathcal{Q}, \gS_k) \nonumber\\
+ \frac{c_1}{1-e^{-c_1}} \cdot  \frac{\mathrm{KL}(\gQ||\gP)+\log \frac{2}{\delta}}{nc_1} \nonumber\\
+ \frac{c_1c_2}{(1-e^{-c_2})(1-e^{-c_1})} \cdot \frac{\mathrm{KL}(Q||P)+\log \frac{2n}{\delta}}{(H+L)c_2}
\end{align}
where \(er(\gQ)\) and \(\hat{er}(\gQ, \gS_k)\) are the generalization error and training error of {\shortname}, respectively.
\end{theorem}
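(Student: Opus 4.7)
The plan is to follow the two-level PAC-Bayes meta-learning framework of \citet{amit2018meta}, instantiated with Catoni's exponential PAC-Bayes inequality (which is what produces the $c/(1-e^{-c})$ factors appearing in the theorem, rather than the $\sqrt{\cdot}$ factors of McAllester-style bounds). The instances $\gS_1,\ldots,\gS_n$, each consisting of a lookback window and forecast horizon of total length $H+L$, play the role of meta-learning tasks; the base parameters $\theta$ correspond to the adapted last-layer weights $\mW^{(K)}$ from the inner loop and the meta parameters $\phi$ to the rest of the network from the outer loop.

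First, I would establish the \emph{base-level} PAC-Bayes bound. For a fixed prior $P$ over $\theta$ and any posterior $Q$, Catoni's inequality applied to the $H+L$ samples of instance $\gS_k$ with parameter $c_2 > 0$ and confidence $\delta/(2n)$ yields
\begin{equation*}
er(Q,\gD_k) \;\leq\; \frac{c_2}{1-e^{-c_2}}\left(\hat{er}(Q,\gS_k) + \frac{\mathrm{KL}(Q\|P)+\log(2n/\delta)}{(H+L)\,c_2}\right)
\end{equation*}
simultaneously for all $Q$. A union bound over $k=1,\ldots,n$ ensures this inequality holds for every instance with probability at least $1-\delta/2$ over the draw of the samples.

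Next, I would apply Catoni's inequality at the \emph{meta-level}. Treating $\phi$ as the random object drawn from $\gP$ (prior) or $\gQ$ (posterior), and treating the inner expected risk $\E_{\phi \sim \gQ}[er(Q(\phi),\gD)]$ as the meta-quantity to be estimated from the i.i.d.\ instance losses, a second application of Catoni's bound with parameter $c_1 > 0$ and confidence $\delta/2$ gives
\begin{equation*}
er(\gQ) \;\leq\; \frac{c_1}{1-e^{-c_1}}\left(\frac{1}{n}\sum_{k=1}^n \E_{\phi \sim \gQ}[er(Q,\gD_k)] + \frac{\mathrm{KL}(\gQ\|\gP)+\log(2/\delta)}{n\,c_1}\right).
\end{equation*}
Substituting the base-level bound inside the expectation and taking the outer union bound ($\delta/2+\delta/2=\delta$) yields the three-term expression in the theorem: the empirical error is multiplied by both Catoni factors (giving the $c_1c_2/((1-e^{-c_1})(1-e^{-c_2}))$ coefficient), the base KL term inherits the same double factor divided by $(H+L)c_2$, and the meta KL term inherits only the outer Catoni factor divided by $nc_1$.

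The main obstacle will be handling the two-level randomness carefully: the inner posterior $Q$ is allowed to depend on $\phi$, so Catoni's base-level bound must hold \emph{uniformly} over all admissible $Q$, which is precisely why the union bound over the $n$ instances is needed (producing the $\log(2n/\delta)$ term) before one can safely take the expectation $\E_{\phi \sim \gQ}$ on both sides. A secondary technical point is that Catoni's inequality requires the loss to take values in $[0,1]$, which matches the assumption stated just before the theorem; if one wishes to instantiate the bound for the squared-error reconstruction loss used in \cref{eq:inner-loop,eq:outer-loop}, a clipping or boundedness argument on $\vy_t$ and the network outputs must be added. Everything else reduces to the routine change-of-measure steps underlying PAC-Bayes, and no properties of the specific INR architecture or ridge regressor are required for this argument.
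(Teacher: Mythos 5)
Your proposal is correct and follows essentially the same route as the paper's proof: two applications of Catoni's PAC-Bayes bound (one at the base level over the $H+L$ samples of each instance with parameter $c_2$ and confidence $\delta/(2n)$, one at the meta level over the $n$ instances with parameter $c_1$ and confidence $\delta/2$), combined by substituting the base-level bound into the meta-level bound and taking a union bound. Your remarks on uniformity over the $\phi$-dependent posteriors $Q$ and on the $[0,1]$-boundedness requirement of the loss are consistent with the assumptions the paper states before invoking Catoni's inequality.
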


\begin{proof}
Our proof contains two steps. First, we bound the error within observed instances due to observing a limited number of samples. Then we bound the error on the instance environment level due to observing a finite number of instances. Both of the two steps utilize Catoni's classical PAC-Bayes bound \citep{catoni2007pac} to measure the error. Here, we give Catoni's classical PAC-Bayes bound.

\begin{theorem}{(Catoni's bound \citep{catoni2007pac})} 
\label{thm:catoni} Let $\mathcal{X}$ be a sample space, $P(X)$ a distribution over $\mathcal{X}$, $\Theta$ a hypothesis space. Given a loss function $\ell(\theta, X): \Theta \times \mathcal{X} \rightarrow [0, 1]$ and a collection of M i.i.d random variables ($X_1, \ldots, X_M$) sampled from $P(X)$. Let $\pi$ be a prior distribution over hypothesis space. Then, for any $\delta \in (0,1]$ and any real number $c>0$, the following bound holds uniformly for all posterior distributions $\rho$ over hypothesis space,
\begin{align}
P\left(
\underset{X_i\sim P(X), \theta\sim\rho}{\mathbb{E}}[\ell(\theta, X_i)] \leq
\frac{c}{1-e^{-c}}\Big[\frac{1}{M}\sum_{m=1}^{M}\underset{\theta\sim\rho}{\mathbb{E}}[\ell(\theta, X_m)]
+ \frac{\mathrm{KL}(\rho||\pi)+\log \frac{1}{\delta}}{Mc}\Big], \forall\rho 
\right) \nonumber \\
\geq 1 -\delta. \nonumber
\end{align}
\end{theorem}
We first utilize \cref{thm:catoni} to bound the generalization error in each of the observed instances. Let $k$ be the index of instance, we have the definition of expected error and empirical error as follows,
\begin{align}
er(\mathcal{Q}, \gD_k)
& =\underset{P\sim \mathcal{Q}}{\mathbb{E}}
\quad\underset{h\sim Q(\gS_k, P)}{\mathbb{E}}
\quad\underset{z\sim \gD_k}{\mathbb{E}} \ell(h, z), \\
\hat{er}(\mathcal{Q}, \gS_k) & =\underset{P\sim \mathcal{Q}}{\mathbb{E}}
\quad\underset{h\sim Q(\gS_k, P)}{\mathbb{E}}
\quad \frac{1}{H+L}\sum_{j=k-L}^{k+H-1} \ell(h, z_j).
\end{align}

Then, according to \cref{thm:catoni}, for any $\delta_k \sim (0, 1]$ and $c_2>0$, we have
\begin{align}
P\left(
er(\gQ, \gD_k) \leq \frac{c_2}{1-e^{-c_2}}\hat{er}(\mathcal{Q}, \gS_k)
+ \frac{c_2}{1-e^{-c_2}} \cdot  \frac{\mathrm{KL}(Q||P)+\log \frac{1}{\delta_k}}{(H+L)c_2}
\right) \geq 1 - \delta_k. \label{eq:in_bound}
\end{align}

Next, we bound the error due to observing a limited number of instances from the environment. Similarly, we have the definition of expected instance error as follows
\begin{align}
er(\mathcal{Q}) 
&= \underset{D \sim E}{\mathbb{E}}
\quad\underset{\gS\sim \gD^{H+L}}{\mathbb{E}}
\quad\underset{P\sim \mathcal{Q}}{\mathbb{E}}
\quad\underset{h\sim Q(\gS, P)}{\mathbb{E}}
\quad\underset{z\sim D}{\mathbb{E}} \ell(h, z) \nonumber \\
&=\underset{D \sim E}{\mathbb{E}}
\quad\underset{\gS\sim \gD^{H+L}}{\mathbb{E}}er(\mathcal{Q}, D).
\end{align}
Then we have the definition of error across the $n$ instances,
\begin{align}
\frac{1}{n}\sum_{k=1}^{n}\underset{P \sim \mathcal{Q}}{\mathbb{E}}
\quad\underset{h\sim Q(\gS_k, P)}{\mathbb{E}}
\quad\underset{z\sim \gD_k}{\mathbb{E}} \ell(h, z) 
=\frac{1}{n}\sum_{k=1}^{n} er(\mathcal{Q}, \gD_k).
\end{align}
Then \cref{thm:catoni} says that the following holds for any $\delta_0 \sim (0, 1]$ and $c_1>0$, we have
\begin{align}
P\left(
er(\mathcal{Q}) \leq \frac{c_1}{1-e^{-c_1}}\frac{1}{n}\sum_{k=1}^{n} er(\mathcal{Q}, \gD_k)
+ \frac{c_1}{1-e^{-c_1}} \cdot  \frac{\mathrm{KL}(\mathcal{Q}||\mathcal{P})+\log \frac{1}{\delta_0}}{nc_1}
\right) \geq 1 - \delta_0. \label{eq:between_bound}
\end{align}
Finally, by employing a union bound argument (Lemma 1, \citet{amit2018meta}), we could bound the probability of the intersection of the events in \cref{eq:between_bound} and \cref{eq:in_bound} For any $\delta>0$, set $\delta_0 = \frac{\delta}{2}$ and $\delta_k = \frac{\delta}{2n}$ for $k=1,\dots,n$, 
\begin{align}
P\left(
er(\mathcal{Q}) \leq \frac{c_1c_2}{(1-e^{-c_1})(1-e^{-c_2})}\cdot \frac{1}{n}\sum_{k=1}^{n}\hat{er}(\mathcal{Q}, \gS_k)
+ \frac{c_1}{1-e^{-c_1}} \cdot  \frac{\mathrm{KL}(\mathcal{Q}||\mathcal{P})+\log \frac{2}{\delta}}{nc_1}
\right. \nonumber\\
\left. 
+ \frac{c_1c_2}{(1-e^{-c_2})(1-e^{-c_1})} \cdot \frac{\mathrm{KL}(Q||P)+\log \frac{2n}{\delta}}{(H+L)c_2}
\right) \geq 1 - \delta.
\end{align}
\end{proof}

\newpage
\section{Synthetic Data}
\label{app:synthetic}
The training set for each synthetic data experiment consists 1000 functions/tasks, while the test set contains 100 functions/tasks. We ensure that there is no overlap between the train and test sets.

\paragraph{Linear}
Samples are generated from the function \(y = ax + b\) for \(x \in [-1, 1]\). This means that each function/task consists of 400 evenly spaced points between -1 and 1. The parameters of each function/task (i.e. \(a, b\)) are sampled from a normal distribution with mean 0 and standard deviation of 50, i.e. \(a, b \sim \gN(0, 50^2)\).

\paragraph{Cubic}
Samples are generated from the function \(y = ax^3 + bx^2 + cx +d\) for \(x \in [-1, 1]\) for 400 points. Parameters of each task are sampled from a continuous uniform distribution with minimum value of -50 and maximum value of 50, i.e. \(a, b, c, d \sim \gU(-50, 50)\).

\paragraph{Sums of sinusoids}
Sinusoids come from a fixed set of frequencies, generated by sampling \(\omega \sim \gU(0, 12 \pi)\). We fix the size of this set to be five, i.e. \(\Omega = \{\omega_1, \ldots, \omega_5\}\). 
Each function is then a sum of \(J\) sinusoids, where \(J \in \{1, 2, 3, 4, 5\}\) is randomly assigned. 
The function is thus \(y = \sum_{j=1}^J A_j \sin(\omega_{r_j} x + \varphi_j)\) for \(x \in [0, 1]\), where the amplitude and phase shifts are freely chosen via \(A_j \sim \gU(0.1, 5), \varphi_j \sim \gU(0, \pi)\), but the frequency is decided by \(r_j \in \{1, 2, 3, 4, 5\}\) to randomly select a frequency from the set \(\Omega\).

The predictions from {\shortname} in \cref{fig:synthetic} demonstrate some noise, likely stemming from the model's capability to learn high frequency features due to the use of implicit neural representations with random Fourier features. Since the synthetic data are all low frequency, smoothly changing functions, the noise is likely to be artifacts from the concatenated Fourier features layer, which should go away if the scale parameter of the Fourier features are carefully fine-tuned. However, the power of our proposed concatenated Fourier features layer is that the model is able to fit to both high and low frequency features without tuning, though at the expense of some noise as seen in the figure.
\section{Datasets}
\label{app:datasets}

\textbf{ETT}\footnote{\url{https://github.com/zhouhaoyi/ETDataset}} \cite{zhou2021informer} - Electricity Transformer Temperature provides measurements from an electricity transformer such as load and oil temperature. We use the \textit{ETTm2} subset, consisting measurements at a 15 minutes frequency.

\textbf{ECL}\footnote{\url{https://archive.ics.uci.edu/ml/datasets/ElectricityLoadDiagrams20112014}} - Electricity Consuming Load provides measurements of electricity consumption for 321 households from 2012 to 2014. The data was collected at the 15 mintue level, but is aggregated hourly.

\textbf{Exchange}\footnote{\url{https://github.com/laiguokun/multivariate-time-series-data}} \cite{lai2018modeling} - a collection of daily exchange rates with USD of eight countries (Australia, United Kingdom, Canada, Switzerland, China, Japan, New Zealand, and Singapore) from 1990 to 2016.

\textbf{Traffic}\footnote{\url{https://pems.dot.ca.gov/}} - dataset from the California Department of Transportation providing the hourly road occupancy rates from 862 sensors in San Francisco Bay area freeways.

\textbf{Weather}\footnote{\url{https://www.bgc-jena.mpg.de/wetter/}} - provides measurements of 21 meteorological indicators such as air temperature, humidity, etc., every 10 minutes for the year of 2020 from the Weather Station of the Max Planck Biogeochemistry Institute in Jena, Germany.

\textbf{ILI}\footnote{\url{https://gis.cdc.gov/grasp/fluview/fluportaldashboard.html}} - Influenza-like Illness measures the weekly ratio of patients seen with ILI and the total number of patients, obtained by the Centers for Disease Control and Prevention of the United States between 2002 and 2021.

\newpage

\newpage
\section{{\shortname} Implementation Details}
\label{app:implementation}
\paragraph{Optimization}
We train {\shortname} with the Adam optimizer \citep{kingma2014adam} with a learning rate scheduler following a linear warm up and cosine annealing scheme. Gradient clipping by norm is applied. The ridge regressor regularization coefficient, \(\lambda\), is trained with a different, higher learning rate than the rest of the meta parameters. We use early stopping based on the validation loss, with a fixed patience hyperparameter (number of epochs for which loss deteriorates before stopping). All experiments are performed on an Nvidia A100 GPU.

\paragraph{Model}
The ridge regression regularization coefficient is a learnable parameter constrained to positive values via a softplus function. We apply Dropout \citep{srivastava2014dropout}, then LayerNorm \citep{ba2016layer} after the ReLU activation function in each INR layer. The size of the random Fourier feature layer is set independently of the layer size, in which we define the total size of the random Fourier feature layer -- the number of dimensions for each scale is divided equally.

\section{{\shortname} Hyperparameters}
\label{app:hyperparams}
\begin{table}[H]
  \centering
  \caption{Hyperparameters used in {\shortname}.}
    \begin{tabular}{cll}
    \toprule
          & Hyperparameter & Value \\
    \midrule
    \multirow{7}[2]{*}{\begin{sideways}Optimization\end{sideways}} & Epochs & 50 \\
          & Learning rate & 1e-3 \\
          & \(\lambda\) learning rate & 1.0 \\
          & Warm up epochs & 5 \\
          & Batch size & 256 \\
          & Early stopping patience & 7 \\
          & Max gradient norm & 10.0 \\
    \midrule
    \multirow{7}[2]{*}{\begin{sideways}Model\end{sideways}} & Layers & 5 \\
          & Layer size & 256 \\
          & \(\lambda\) initialization & 0.0 \\
          & Scales & \([0.01, 0.1, 1, 5, 10, 20, 50, 100]\) \\
          & Fourier features size & 4096 \\
          & Dropout & 0.1 \\
          & Lookback length multiplier, \(\mu\) & \(\mu \in \{1,3,5,7,9\}\) \\
    \bottomrule
    \end{tabular}%
  \label{tab:hyperparams}%
\end{table}%

\section{Univariate Forecasting Benchmark}
\label{app:univar-benchmark}
\begin{table}[H]
  \centering
  \caption{Univariate forecasting benchmark on long sequence time-series forecasting. Best results are highlighted in \textbf{bold}, and second best results are \underline{underlined}.}
  \resizebox{\textwidth}{!}{
\begin{tabular}{c|c|ccccccrrrrrrrrrrrrrrcc}
\toprule
\multicolumn{2}{c}{Methods} & \multicolumn{2}{c}{{\shortname}} & \multicolumn{2}{c}{N-HiTS} & \multicolumn{2}{c}{ETSformer} & \multicolumn{2}{c}{Fedformer} & \multicolumn{2}{c}{Autoformer} & \multicolumn{2}{c}{Informer} & \multicolumn{2}{c}{N-BEATS} & \multicolumn{2}{c}{DeepAR} & \multicolumn{2}{c}{Prophet} & \multicolumn{2}{c}{ARIMA} & \multicolumn{2}{c}{GP} \\
\midrule
\multicolumn{2}{c}{Metrics} & MSE   & MAE   & MSE   & MAE   & MSE   & MAE   & \multicolumn{1}{c}{MSE} & \multicolumn{1}{c}{MAE} & \multicolumn{1}{c}{MSE} & \multicolumn{1}{c}{MAE} & \multicolumn{1}{c}{MSE} & \multicolumn{1}{c}{MAE} & \multicolumn{1}{c}{MSE} & \multicolumn{1}{c}{MAE} & \multicolumn{1}{c}{MSE} & \multicolumn{1}{c}{MAE} & \multicolumn{1}{c}{MSE} & \multicolumn{1}{c}{MAE} & \multicolumn{1}{c}{MSE} & \multicolumn{1}{c}{MAE} & MSE   & MAE \\
\midrule
\multirow{4}[2]{*}{\begin{sideways}ETTm2\end{sideways}} & 96    & \underline{0.065} & \underline{0.186} & 0.066 & \textbf{0.185} & 0.080 & 0.212 & \textbf{0.063} & 0.189 & 0.065 & 0.189 & 0.088 & 0.225 & 0.082 & 0.219 & 0.099 & 0.237 & 0.287 & 0.456 & 0.211 & 0.362 & 0.125 & 0.273 \\
      & 192   & \underline{0.096} & \underline{0.234} & \textbf{0.087} & \textbf{0.223} & 0.150 & 0.302 & 0.102 & 0.245 & 0.118 & 0.256 & 0.132 & 0.283 & 0.120 & 0.268 & 0.154 & 0.310 & 0.312 & 0.483 & 0.261 & 0.406 & 0.154 & 0.307 \\
      & 336   & 0.138 & 0.285 & \textbf{0.106} & \textbf{0.251} & 0.175 & 0.334 & \underline{0.130} & \underline{0.279} & 0.154 & 0.305 & 0.180 & 0.336 & 0.226 & 0.370 & 0.277 & 0.428 & 0.331 & 0.474 & 0.317 & 0.448 & 0.189 & 0.338 \\
      & 720   & 0.186 & 0.338 & \textbf{0.157} & \textbf{0.312} & 0.224 & 0.379 & \underline{0.178} & \underline{0.325} & 0.182 & 0.335 & 0.300 & 0.435 & 0.188 & 0.338 & 0.332 & 0.468 & 0.534 & 0.593 & 0.366 & 0.487 & 0.318 & 0.421 \\
\midrule
\multirow{4}[2]{*}{\begin{sideways}Exchange\end{sideways}} & 96    & \textbf{0.086} & \underline{0.226} & \underline{0.093} & \textbf{0.223} & 0.099 & 0.230 & 0.131 & 0.284 & 0.241 & 0.299 & 0.591 & 0.615 & 0.156 & 0.299 & 0.417 & 0.515 & 0.828 & 0.762 & 0.112 & 0.245 & 0.165 & 0.311 \\
      & 192   & \textbf{0.173} & \underline{0.330} & 0.230 & \textbf{0.313} & \underline{0.223} & 0.353 & 0.277 & 0.420 & 0.273 & 0.665 & 1.183 & 0.912 & 0.669 & 0.665 & 0.813 & 0.735 & 0.909 & 0.974 & 0.304 & 0.404 & 0.649 & 0.617 \\
      & 336   & 0.539 & 0.575 & \textbf{0.370} & \textbf{0.486} & \underline{0.421} & \underline{0.497} & 0.426 & 0.511 & 0.508 & 0.605 & 1.367 & 0.984 & 0.611 & 0.605 & 1.331 & 0.962 & 1.304 & 0.988 & 0.736 & 0.598 & 0.596 & 0.592 \\
      & 720   & \underline{0.936} & \underline{0.763} & \textbf{0.728} & \textbf{0.569} & 1.114 & 0.807 & 1.162 & 0.832 & 0.991 & 0.860 & 1.872 & 1.072 & 1.111 & 0.860 & 1.890 & 1.181 & 3.238 & 1.566 & 1.871 & 0.935 & 1.002 & 0.786 \\
\bottomrule
\end{tabular}%
  }
  \label{tab:main-uni}%
\end{table}%

\newpage
\section{{\shortname} Standard Deviation}
\label{app:sd}
\begin{table}[H]
    \caption{{\shortname} main benchmark results with standard deviation. Experiments are performed over three runs.}
    \begin{subtable}[t]{.5\linewidth}
      \centering
        \caption{Multivariate benchmark.}
        \begin{tabular}{c|c|ll}
        \toprule
        \multicolumn{2}{c}{Metrics} & MSE (SD) & MAE (SD) \\
        \midrule
        \multirow{4}[2]{*}{\begin{sideways}ETTm2\end{sideways}} & 96    & 0.166 (0.000) & 0.257 (0.001) \\
              & 192   & 0.225 (0.001) & 0.302 (0.003) \\
              & 336   & 0.277 (0.002) & 0.336 (0.002) \\
              & 720   & 0.383 (0.007) & 0.409 (0.006) \\
        \midrule
        \multirow{4}[2]{*}{\begin{sideways}ECL\end{sideways}} & 96    & 0.137 (0.000) & 0.238 (0.000) \\
              & 192   & 0.152 (0.000) & 0.252 (0.000) \\
              & 336   & 0.166 (0.000) & 0.268 (0.000) \\
              & 720   & 0.201 (0.000) & 0.302 (0.000) \\
        \midrule
        \multirow{4}[2]{*}{\begin{sideways}Exchange\end{sideways}} & 96    & 0.081 (0.001) & 0.205 (0.002) \\
              & 192   & 0.151 (0.002) & 0.284 (0.003) \\
              & 336   & 0.314 (0.033) & 0.412 (0.020) \\
              & 720   & 0.856 (0.202) & 0.663 (0.082) \\
        \midrule
        \multirow{4}[2]{*}{\begin{sideways}Traffic\end{sideways}} & 96    & 0.390 (0.001) & 0.275 (0.001) \\
              & 192   & 0.402 (0.000) & 0.278 (0.000) \\
              & 336   & 0.415 (0.000) & 0.288 (0.001) \\
              & 720   & 0.449 (0.000) & 0.307 (0.000) \\
        \midrule
        \multirow{4}[2]{*}{\begin{sideways}Weather\end{sideways}} & 96    & 0.166 (0.001) & 0.221 (0.002) \\
              & 192   & 0.207 (0.000) & 0.261 (0.000) \\
              & 336   & 0.251 (0.000) & 0.298 (0.001) \\
              & 720   & 0.301 (0.001) & 0.338 (0.001) \\
        \midrule
        \multirow{4}[2]{*}{\begin{sideways}ILI\end{sideways}} & 24    & 2.425 (0.058) & 1.086 (0.027) \\
              & 36    & 2.231 (0.087) & 1.008 (0.011) \\
              & 48    & 2.230 (0.144) & 1.016 (0.037) \\
              & 60    & 2.143 (0.032) & 0.985 (0.016) \\
        \bottomrule
        \end{tabular}%
    \end{subtable}%
    \begin{subtable}[t]{.5\linewidth}
      \centering
        \caption{Univariate benchmark.}
        \begin{tabular}{c|c|ll}
        \toprule
        \multicolumn{2}{c}{Metrics} & MSE (SD) & MAE (SD) \\
        \midrule
        \multirow{4}[2]{*}{\begin{sideways}ETTm2\end{sideways}} & 96    & 0.065 (0.000) & 0.186 (0.000) \\
              & 192   & 0.096 (0.002) & 0.234 (0.003) \\
              & 336   & 0.138 (0.001) & 0.285 (0.001) \\
              & 720   & 0.186 (0.002) & 0.338 (0.002) \\
        \midrule
        \multirow{4}[2]{*}{\begin{sideways}Exchange\end{sideways}} & 96    & 0.086 (0.000) & 0.226 (0.000) \\
              & 192   & 0.173 (0.004) & 0.330 (0.003) \\
              & 336   & 0.539 (0.066) & 0.575 (0.027) \\
              & 720   & 0.936 (0.222) & 0.763 (0.075) \\
        \bottomrule
        \end{tabular}%
    \end{subtable} 
    \label{tab:sd}
\end{table}

\newpage
\section{Random Fourier Features Scale Hyperparameter Sensitivity Analysis}
\label{app:cff}
\begin{table*}[h]
  \centering
  \caption{Comparison of CFF against the optimal and pessimal scales as obtained from the hyperparameter sweep. We also calculate the change in performance between CFF and the optimal and pessimal scales, where a positive percentage refers to a CFF underperforming, and negative percentage refers to CFF outperforming, calculated as \(\% \; \mathrm{change} = (\mathrm{MSE}_{CFF} - \mathrm{MSE}_{Scale})/\mathrm{MSE}_{Scale}\).}
  \small
    \begin{tabular}{c|c|cccccc}
    \toprule
    \multicolumn{2}{c}{} & \multicolumn{2}{c}{CFF} & \multicolumn{2}{c}{Optimal Scale (\% change)} & \multicolumn{2}{c}{Pessimal Scale (\% change)} \\
    \midrule
    \multicolumn{2}{c}{Metrics} & MSE   & MAE   & MSE   & MAE   & MSE   & MAE \\
    \midrule
    \multirow{4}[2]{*}{\begin{sideways}ETTm2\end{sideways}} & 96    & 0.166 & 0.257 & 0.164 (1.20\%) & 0.257 (-0.05\%) & 0.216 (-23.22\%) & 0.300 (-14.22\%) \\
          & 192   & 0.225 & 0.302 & 0.220 (1.87\%) & 0.301 (0.25\%) & 0.275 (-18.36\%) & 0.340 (-11.25\%) \\
          & 336   & 0.277 & 0.336 & 0.275 (0.70\%) & 0.336 (-0.22\%) & 0.340 (-18.68\%) & 0.375 (-10.57\%) \\
          & 720   & 0.383 & 0.409 & 0.364 (5.29\%) & 0.392 (4.48\%) & 0.424 (-9.67\%) & 0.430 (-4.95\%) \\
    \bottomrule
    \end{tabular}%
  \label{tab:cff}%
\end{table*}%

\begin{table}[H]
  \centering
  \caption{Results from hyperparameter sweep on the scale hyperparameter. Best scores are highlighted in \textbf{bold}, and worst scores are highlighted in \textcolor{red}{\textbf{bold red}}.}
  \resizebox{\textwidth}{!}{
    \begin{tabular}{c|c|cccccccccccccccc}
    \toprule
    \multicolumn{2}{c}{Scale Hyperparam} & \multicolumn{2}{c}{0.01} & \multicolumn{2}{c}{0.1} & \multicolumn{2}{c}{1} & \multicolumn{2}{c}{5} & \multicolumn{2}{c}{10} & \multicolumn{2}{c}{20} & \multicolumn{2}{c}{50} & \multicolumn{2}{c}{100} \\
    \midrule
    \multicolumn{2}{c}{Metrics} & MSE   & MAE   & MSE   & MAE   & MSE   & MAE   & MSE   & MAE   & MSE   & MAE   & MSE   & MAE   & MSE   & MAE   & MSE   & MAE \\
    \midrule
    \multirow{4}[2]{*}{\begin{sideways}ETTm2\end{sideways}} & 96    & \textcolor[rgb]{ 1,  0,  0}{\textbf{0.216}} & \textcolor[rgb]{ 1,  0,  0}{\textbf{0.300}} & 0.189 & 0.285 & 0.173 & 0.268 & 0.168 & 0.262 & 0.166 & 0.260 & 0.165 & 0.258 & 0.165 & 0.259 & \textbf{0.164} & \textbf{0.257} \\
          & 192   & \textcolor[rgb]{ 1,  0,  0}{\textbf{0.275}} & \textcolor[rgb]{ 1,  0,  0}{\textbf{0.340}} & 0.264 & 0.333 & 0.239 & 0.317 & 0.225 & \textbf{0.301} & 0.225 & 0.303 & 0.224 & 0.302 & 0.224 & 0.304 & \textbf{0.220} & 0.301 \\
          & 336   & \textcolor[rgb]{ 1,  0,  0}{\textbf{0.340}} & \textcolor[rgb]{ 1,  0,  0}{\textbf{0.375}} & 0.319 & 0.371 & 0.292 & 0.351 & \textbf{0.275} & 0.337 & 0.277 & \textbf{0.336} & 0.282 & 0.345 & 0.278 & 0.342 & 0.280 & 0.344 \\
          & 720   & \textcolor[rgb]{ 1,  0,  0}{\textbf{0.424}} & 0.430 & 0.405 & 0.420 & 0.381 & 0.412 & \textbf{0.364} & \textbf{0.392} & 0.375 & 0.408 & 0.410 & \textcolor[rgb]{ 1,  0,  0}{\textbf{0.430}} & 0.396 & 0.423 & 0.406 & 0.429 \\
    \bottomrule
    \end{tabular}%
  }
  \label{tab:scales}%
\end{table}%

We perform a comparison between the optimal and pessimal scale hyperparameter for the vanilla random Fourier features layer, against our proposed CFF. 
We first report the results on each scale hyperparameter for the vanilla random Fourier features layer in \cref{tab:scales}. 
As with the other ablation studies, the results reported in \cref{tab:scales} is based on performing a hyperparameter sweep across lookback length multiplier, and selecting the optimal settings based on the validation set, and reporting the test set results. 
Then, the optimal and pessimal scales are simply the best and worst results based on \cref{tab:scales}. \cref{tab:cff} shows that CFF achieves extremely low deviation from the optimal scale across all settings, yet retains the upside of avoiding this expensive hyperparameter tuning phase. We also observe that tuning the scale hyperparameter is extremely important, as CFF obtains up to a 23.22\% improvement in MSE over the pessimal scale hyperparameter.

\section{Ablation Studies Details}
\label{app:ablation}

In this section, we list more details on the models compared to in the ablation studies section. Unless otherwise stated, we perform the same hyperparameter tuning for all models in the ablation studies, and use the same standard hyperparameters such as number of layers, layer size, etc.

\subsection{Ablation study on variants of {\shortname}}
\label{app:ablation-variants}

\paragraph{Datetime Features}
As each dataset comes with a timestamps for each observation, we are able to construct datetime features from these timestamps. We construct the following features:
\begin{enumerate}
    \item Quarter-of-year
    \item Month-of-year
    \item Week-of-year
    \item Day-of-year
    \item Day-of-month
    \item Day-of-week
    \item Hour-of-day
    \item Minute-of-hour
    \item Second-of-minute
\end{enumerate}
Each feature is initially an integer value, e.g. month-of-year can take on values in \(\{0, 1, \ldots, 11\}\), which we subsequently normalize to a \([0,1]\) range. Depending on the data sampling frequency, the appropriate features can be chosen. For the ETTm2 dataset, we used all features except second-of-minute since it is sampled at a 15 minute frequency.

\paragraph{RR} Removing the ridge regressor module refers to replacing it with a simple linear layer, \(\mathrm{Linear}: \R^d \to \R^m\), where \(\mathrm{Linear}(\vx) = \mW \vx + \vb\), \(\vx \in \R^d, \mW \in \R^{m \times d}, \vb \in \R^m\). This corresponds to a straight forward INR, which is trained across all lookback-horizon pairs in the dataset.

\paragraph{Local} For models marked ``Local'', we similarly remove the ridge regressor module and replace it with a linear layer. Yet, the model is not trained across all lookback-horizon pairs in the dataset. Instead, for each lookback-horizon pair in the validation/test set, we fit the model to the lookback window via gradient descent, and then perform prediction on the horizon to obtain the forecasts. A new model is trained from scratch for each lookback-horizon window. We perform tuning on an extra hyperparameter, the number of epochs to perform gradient descent, for which we search through \(\{10, 20, 30, 40, 50\}\).

\paragraph{Finetune} Models marked ``Finetune'' are similar to ``Local'', except that they have been trained on the training set first, and for each lookback-horizon pair in the test set, they are ``finetuned'' on the lookback window.

\paragraph{Full MAML} ``Full MAML'' indicates the setting for which MAML is performed on the entire deep time-index model, by backpropagating through inner loop gradient steps as per \citet{finn2017model}, rather than our proposed fast and efficient meta-optimization framework. Inner loop optimization is performed using the Adam optimizer, and is tuned over lookback length multiplier values of 
\(\{1, 3, 5, 7, 9\}\), and inner loop iterations of \(\{1, 5, 10\}\).

\subsection{Ablation study on backbone models}
\label{app:ablation-model}
For all models in this section, we retain the differentiable closed-form ridge regressor, to identify the effects of the backbone model used.

\paragraph{MLP}
The random Fourier features layer is a mapping from coordinate space to latent space \(\gamma: \R^c \to \R^d\). To remove the effects of the random Fourier features layer, we simply replace it with a with a linear map, \(\mathrm{Linear}: \R^c \to \R^d\). 

\paragraph{SIREN}
We replace the random Fourier features backbone with the SIREN model which is introduced by \cite{sitzmann2020implicit}. In this model, periodical activation functions are used, i.e. \(\sin(\vx)\), along with specified weight initialization scheme.

\paragraph{RNN} We use a 2 layer LSTM with hidden size of 256. Inputs are observations, \(\vy_t\), in an IMS fashion, predicting the next time step, \(\vy_{t+1}\). 

\section{Computational Efficiency Experiments Details}
\label{app:efficiency}

\paragraph{Trans/In/Auto/ETS-former} 
We use a model with 2 encoder and 2 decoder layers with a hidden size of 512, as specified in their original papers.

\paragraph{N-BEATS} We use an N-BEATS model with 3 stacks and 3 layers (relatively small compared to 30 stacks and 4 layers used in their orignal paper\footnote{\url{https://github.com/ElementAI/N-BEATS/blob/master/experiments/electricity/generic.gin}}), with a hidden size of 512. Note, N-BEATS is a univariate model and values presented here are multiplied by a factor of \(m\) to account for the multivariate data. Another dimension of comparison is the number of parameters used in the model. Demonstrated in \cref{tab:num-params}, fully connected models like N-BEATS, their number of parameters scales linearly with lookback window and forecast horizon length, while for Transformer-based and {\shortname}, the number of parameters remains constant.

\paragraph{N-HiTS} We use an N-HiTS model with hyperparameters as sugggested in their original paper (3 stacks, 1 block in each stack, 2 MLP layers, 512 hidden size). For the following hyperparameters which were not specified (subject to hyperparameter tuning), we set the pooling kernel size to \([2, 2, 2]\), and the number of stack coefficients to \([24, 12, 1]\). Similar to N-BEATS, N-HiTS is a univariate model, and values were multiplied by a factor of \(m\) to account for the multivariate data.

\begin{table}[H]
  \centering
  \caption{Number of parameters in each model across various lookback window and forecast horizon lengths. The models were instantiated for the ETTm2 multivariate dataset (this affects the embedding and projection layers in Autoformer). Values for N-HiTS in this table are \textbf{not} multiplied by \(m\) since it is a global model (i.e. a single univariate model is used for all dimensions of the time-series).}
    \begin{tabular}{c|r|rrr}
    \toprule
    \multicolumn{2}{c}{Methods} & \multicolumn{1}{c}{Autoformer} & \multicolumn{1}{c}{N-HiTS} & \multicolumn{1}{c}{{\shortname}} \\
    \midrule
    \multirow{8}[2]{*}{\begin{sideways}Lookback\end{sideways}} & 48    &   10,535,943  &           927,942  &          1,314,561  \\
          & 96    &   10,535,943  &        1,038,678  &          1,314,561  \\
          & 168   &   10,535,943  &        1,204,782  &          1,314,561  \\
          & 336   &   10,535,943  &        1,592,358  &          1,314,561  \\
          & 720   &   10,535,943  &        2,478,246  &          1,314,561  \\
          & 1440  &   10,535,943  &        4,139,286  &          1,314,561  \\
          & 2880  &   10,535,943  &        7,461,366  &          1,314,561  \\
          & 5760  &   10,535,943  &      14,105,526  &          1,314,561  \\
    \midrule
    \multirow{8}[2]{*}{\begin{sideways}Horizon\end{sideways}} & 48    &   10,535,943  &           927,942  &          1,314,561  \\
          & 96    &   10,535,943  &           955,644  &          1,314,561  \\
          & 168   &   10,535,943  &           997,197  &          1,314,561  \\
          & 336   &   10,535,943  &        1,094,154  &          1,314,561  \\
          & 720   &   10,535,943  &        1,315,770  &          1,314,561  \\
          & 1440  &   10,535,943  &        1,731,300  &          1,314,561  \\
          & 2880  &   10,535,943  &        2,562,360  &          1,314,561  \\
          & 5760  &   10,535,943  &        4,224,480  &          1,314,561  \\
    \bottomrule
    \end{tabular}%
  \label{tab:num-params}%
\end{table}%


\end{document}